\numberwithin{equation}{section}
\renewcommand{\cite}[1]{\citep{#1}}   
\newcommand{\citeasnoun}[1]{\citet{#1}}
\newcommand{\norm}[1]{\left\lVert#1\right\rVert}
\newcommand{\real}{\mathbb{R}}
\newcommand{\states}{\mathcal{S}}
\newcommand{\actions}{\mathcal{A}}
\newcommand{\opt}{^\star}
\newcommand{\tr}{^\top}
\newcommand{\data}{\mathcal{D}}
\newcommand{\cs}{\\[1ex] & }
\newcommand{\maximize}[1]{\operatorname*{maximize}_{#1} &}
\newcommand{\one}{\bm{1}}
\newcommand{\zero}{\bm{0}}
\newcommand{\PP}{\mathcal{P}}
\renewcommand{\ss}{\,\mid\,}
\newcommand{\srob}{\rho^S}
\newcommand{\drob}{\rho^D}
\newcommand{\rrob}{\rho^R}
\newcommand{\sarob}{\rho^{RA}}
\newcommand{\sr}{R}				
\newcommand{\Real}{\mathbb{R}}
\newcommand{\E}[1]{\mathbb{E}\left[ #1 \right]}
\newcommand{\Ex}[2]{\mathbb{E}_{#1}\left[ #2 \right]}
\renewcommand{\Pr}[2]{\mathbb{P}_{#1}\left[ #2 \right]}
\newcommand{\eye}{\bm{I}}
\newcommand{\Pf}{f}
\newcommand{\st}{\operatorname{s.t.}}
\newcommand{\stc}{\operatorname{subject\,to} &}
\newcommand{\Exp}[2]{\mathbb{E}_{#1} \left[ #2 \right] }
\renewcommand{\Pr}{\mathbb{P}}
\newcommand{\Bell}{\mathfrak{T}}
\newcommand{\Topt}{\mathfrak{T}}
\newcommand{\w}{w}
\theoremstyle{plain}
\newtheorem{thm}{Theorem}[section]
\newtheorem{cor}[thm]{Corollary}
\newtheorem{lem}[thm]{Lemma}
\newtheorem{prop}[thm]{Proposition}
\theoremstyle{definition}
\theoremstyle{remark}
\DeclareMathOperator{\diag}{diag}
\renewcommand{\Pr}{\mathbb{P}}
\newcommand{\mm}[2][]{\ifthenelse{\isempty{#1}}{}{\textcolor{red}{[\sout{#1}]}}\textcolor{green}{#2}}
\newenvironment{mprog}{\begin{array}{>{\displaystyle}r>{\displaystyle}l>{\displaystyle}l}}{\end{array}}
\DeclareMathOperator{\cvar}{CVaR}
\DeclareMathOperator{\var}{VaR}
\begin{document}

\twocolumn[
\aistatstitle{Soft-Robust Algorithms for Batch Reinforcement Learning}
\aistatsauthor{ Elita A. Lobo \And Mohammad Ghavamzadeh \And Marek Petrik }
\aistatsaddress{ University of Massachusetts, Amherst\\\url{elitalobo@cs.umass.edu} \And  Google Research\\\url{ghavamza@google.com} \And University of New Hampshire \\\url{mpetrik@cs.unh.edu}}
]

\begin{abstract}
In reinforcement learning, robust policies for high-stakes decision-making problems with limited data are usually computed by optimizing the percentile criterion, which minimizes the probability of a catastrophic failure. Unfortunately, such policies are typically overly conservative as the percentile criterion is non-convex, difficult to optimize, and ignores the mean performance. To overcome these shortcomings, we study the soft-robust criterion, which uses risk measures to balance the mean and percentile criterion better. In this paper, we establish the soft-robust criterion's fundamental properties, show that it is NP-hard to optimize, and propose and analyze two algorithms to approximately optimize it. Our theoretical analyses and empirical evaluations demonstrate that our algorithms compute much less conservative solutions than the existing approximate methods for optimizing the percentile-criterion.
\end{abstract}

\section{Introduction}
\label{sec:intro}

Markov Decision Process (MDP) is an established model for optimizing returns in sequential decision-making problems~\citep{putterman1994DP,Sutton2018RL}. In the batch Reinforcement Learning (RL) setting, MDPs must be estimated from logged data. However, without the ability to explore, the transition probability estimates derived from the logged data are inevitably imprecise. Such errors in the transition model estimate often result in learning policies that fail catastrophically when deployed. In this paper, we aim to compute robust policies from logged data in a way that accounts for the uncertainty in transition models.
As is common in prior work, we use parametric Bayesian models to represent the uncertainty in the transition model estimates~\citep{Xu2006Robustnesstradeoff,Xu2009Parametric,Xu2012Distributionally,Delage2009Percentile,Petrik2016SafePI,russel2019BeyondCR}. The most common robust objective in this setting is the \emph{percentile criterion}, which maximizes a given $\alpha$-quantile of the expected returns~\citep{Delage2009Percentile,Tamar2014Scaling,Chow18RC,russel2019BeyondCR}.

Despite its simplicity and popularity, the percentile criterion objective suffers from three major shortcomings. {\em First}, it ignores the mean performance even when there are multiple optimal random policies~\citep{Iancu2014Pareto}. This behavior gives rise to policies that are unnecessarily pessimistic.
{\em Second}, the percentile criterion also ignores the tail of the distribution below the $(1-\alpha)$ quantile in addition to ignoring the mean. In problems with heavy tail risks, such as some portfolio optimization settings~\cite{Krokhmal2003PO}, the percentile criterion learns over-optimistic policies that may result in disastrous worst-case outcomes~\citep{Rockafellar2000optimizationCVAR}. {\em Third}, the percentile criterion is non-convex which complicates its analysis and optimization. Optimizing this criterion using robust optimization methods~\cite{Ben-Tal2019SoftRobust} requires constructing a convex uncertainty set that accounts for $\alpha\%$ of the model parameter values~\citep{Wiesemann2013RMDP,russel2019BeyondCR}. In practice, these sets are constructed using statistical confidence intervals. Recent work~\cite{russel2019BeyondCR,gupta2019NearOpt} has shown that such uncertainty sets (that are confidence regions) are often very large and lead to overly-conservative policies.

To overcome the limitations of the percentile criterion, we adopt the \emph{soft-robust criterion}~\citep{Ben-Tal2019SoftRobust}. This criterion optimizes a convex combination of the mean and a robust performance and is itself convex. We measure the robust performance in soft-robust criterion using the Conditional Value at Risk ($\cvar$) measure~\citep{Rockafellar2000optimizationCVAR}, which represents the mean of the expected return of the worst $(1-\alpha)\%$ of the models. The $\cvar$ measure bounds the percentile criterion from below and takes into account the tail risk. We note that although the soft-robust criterion has been widely studied in finance and risk-averse RL, (see e.g.,~\citealt{Prashant2014PG,Chow2014CVAR,tamar15PG,tang2019worst}), it is not well-understood in the context of robust RL.
We discuss related work in greater depth in \cref{sec:Related work}.

We begin by analyzing a new \emph{static} soft-robust formulation for RL, which differs significantly from earlier formulations~\citep{Xu2012Distributionally,Derman2018Soft-RobustAC,Mankowitz2020RobustRL}. Since the earlier formulations embed the robustness within the dynamic programming equations, we refer to them as robust objectives with \emph{dynamic uncertainty model}. Dynamic uncertainty model can be interpreted, in certain cases, as an assumption that the uncertain transition model changes randomly in every time step~\cite{Xu2012Distributionally}. Our {\em static uncertainty model}, in contrast, assumes that the model is uncertain but it does not change throughout the execution. We show that, despite being computationally challenging, the static uncertainty model has two important advantages. {\em First}, the static uncertainty model is less conservative than the dynamic one because it allows the agent to exploit any information about the uncertain parameters to make better decisions. {\em Second}, because the static uncertainty model accounts for model uncertainty more accurately, it effectively eliminates over-optimism driven by model uncertainty, also known as the \emph{optimizer's curse}~\cite{smith2006Optimizer}.


In addition to describing the static soft-robust criterion, we derive two new algorithms for optimizing it in \cref{sec:Soft-Robust-optimization}. The first algorithm is a new mixed-integer linear program (MILP) formulation that computes the optimal deterministic soft-robust policy. While this non-convex formulation obviously does not scale beyond small problems, it is unlikely that more tractable optimal algorithms exist, because the soft-robust objective is NP hard. The second algorithm approximates the static objective by a robust MDP and scales to continuous problems using value function approximation. Finally, we derive a new structural approximation error bounds for the robust MDP formulation in \cref{sec:approximation_error}. Our experimental results in \cref{sec:experiments} illustrate the algorithms on two small, but realistic, problem domains.


\section{Preliminaries}
\label{sec:motivation}

We model the agent's interaction with the environment as an MDP~\citep{putterman1994DP}. An MDP is a tuple $( \states,\actions,P,r,p_0,\gamma )$ that consists of a set of states $\states = \{1,2,\ldots, S \}$, a set of actions $\actions = \{1,2,\ldots, A\}$, a reward function $r : \states \times \actions \times \states \to \real$, a transition probability function $P : \states\times\actions \to \Delta^S$, an initial state distribution {$p_0\in\Delta^S$}, and a discount rate $\gamma \in (0,1)$. The symbol $\Delta^S$ denotes the $S$-dimensional probability simplex. Our objective is to maximize the infinite-horizon discounted return. We also assume that $\lvert r(s,a,s') \rvert \le r_{\max} \in \Real$ for all $s,s'\in\states$ and $a\in\actions$.

We consider a \emph{batch} RL setting (e.g.,~\citealt{Lange2012BatchRL}), where the reward function $r(s,a,s')$ is known but the true transition model $P\opt$ is unknown and must be estimated from the batch of data $\mathcal{D} = \{(s_i,a_i,s_{i}')\}_{i = 1}^M$, where $s_i'\sim P\opt(s_i,a_i,\cdot)=p\opt_{s_i,a_i}$. We take a parametric Bayesian approach to model the uncertainty over the true transition model $P\opt$~\citep{Delage2009Percentile,Xu2012Distributionally,russel2019BeyondCR,Derman2019BayesianRobust}. In this approach, the transition model {$P\opt$} is a random variable. Using the batch of data $\mathcal{D}$, one can derive a posterior distribution over $P\opt$ conditional on $\mathcal{D}$, which is denoted by $\hat{P} = P\opt \ss \mathcal{D}$ and distributed according to a measure $\Pf$. As it is common in methods like sample average approximation (SAA)~\cite{Shapiro2014StochasticProgramming}, we approximate $\hat{P}$ by finite samples {$\hat{P}^{\omega} ,\, \omega \in \Omega$} with weights $\Pf_\omega,\, \omega \in \Omega$ and sample size $N = \lvert\Omega\rvert$. The samples $\hat{P}^\omega$ in our experiments come from MCMC and can be computed using tools like Stan or PyMC3~(e.g.,~\citealt{Gelman2014BayesianData}).
\nocite{Kallus2020}

A policy $\pi : \states \to \Delta^A$ prescribes the probability of taking an action $a\in\mathcal A$ when the agent is in a state $s \in \states$. We denote by $\Pi = (\Delta^{A})^{\states}$ and $\Pi_{D} = \actions^\states$, the sets of all randomized and deterministic policies, respectively. For a given realization of transition model $P$, the initial state distribution $p_0$, and a policy $\pi$, the expected discounted return is defined as
\begin{equation*}
\rho(\pi,P) \;=\; \mathbb E\Biggl[\sum_{t=0}^\infty \gamma^t \cdot r(S_t,A_t,S_{t+1})\Biggr]~,
\end{equation*}
where $S_0 \sim p_0$, $A_t \sim \pi(S_t)$, and $S_{t+1} \sim P(S_t,A_t,\cdot)$.

\paragraph{Percentile Criterion and Risk Measures.}
Percentile optimization has been commonly used to derive robust policies and risk-adjusted discounted returns for an MDP under uncertainty~\citep{Delage2009Percentile,russel2019BeyondCR}. The chance-constrained objective that this criterion optimizes is of the form
\begin{equation} \label{eq:percentile}
\max_{\pi \in \Pi,\, y \in \real} \; \Bigl\{y \ss \Pr_{\hat{P}\sim\Pf}[ \rho(\pi,\hat{P}) \geq y ] \ge \alpha \Bigr\}\,,
\end{equation}
where $y$ lower-bounds the true expected discounted return with confidence $\alpha\in [0,1]$. Increasing the value of $\alpha$ in~\eqref{eq:percentile} increases the confidence that the return $\rho(\pi,\hat{P})$ is at least $y$. Alternatively, the percentile criterion in~\eqref{eq:percentile} can be interpreted using the framework of risk measures as:
%
\begin{equation} \label{eq:percentile_var}
\max_{\pi \in \Pi} \, \var_{\hat{P}}^{\alpha} \left[\rho(\pi,\hat{P})\right]\,,
\end{equation}
where $\var$ is the well-known value at risk measure~\cite{Shapiro2014StochasticProgramming} that is defined for a random variable $Z$ with PDF $g$ and CDF $G$ as $\var^{\alpha}_g[Z] = \inf\{z \in \real \ss G(z) \geq 1-\alpha\}$.

\paragraph{Robust MDPs.}
Robust MDPs~(RMDPs)~(e.g.,~\citealt{iyengar2005RMDP,Wiesemann2013RMDP}) are commonly used to optimize the percentile criterion~\cite{Delage2009Percentile,Xu2012Distributionally,Petrik2016SafePI,russel2019BeyondCR}. We will also use them in this paper to approximately optimize the soft-robust criterion. RMDPs generalize MDPs by allowing for ambiguous transition models. An RMDP consists of the same components as an MDP, except the fixed transition function $P$ is replaced by an ambiguity set $\mathcal{P} \subseteq \{ P : \states \times \actions \to \Delta^S  \}$ of plausible transition models. The objective is to compute a policy $\pi\in\Pi$ that maximizes the return for the worst-case realization of $P \in \PP$, i.e.,
\begin{equation} \label{eq:rmdp_general}
\max_{\pi \in \Pi}\min_{P \in \mathcal{P}} \; \rho(\pi,P)~.
\end{equation}
Although solving \eqref{eq:rmdp_general} is NP-hard~\cite{Bagnell2004LearningDR}, it is tractable for certain classes of ambiguity set $\PP$, including SA-, S-, R-, and K-rectangular~\cite{iyengar2005RMDP,letallec2007Robust,Mannor2016KRect,goyal2020robust}.
We focus on S-rectangular sets in this paper because they are both general and tractable~\citep{Wiesemann2013RMDP}. All our results can be extended to SA-rectangular sets~\cite{Wiesemann2013RMDP} as shown in~\cref{app:sa_rectangular}. An ambiguity set $\PP$ is S-rectangular if
\begin{equation} \label{eq:s-rect}
    \PP = \left\{p \in (\Delta^S)^{S \times A} \ss (p_{s,a})_{a\in\actions} \in \mathcal{P}_s, \; \forall s \in \states \right\},
\end{equation}
for some $\mathcal{P}_s \subseteq (\Delta^S)^{\actions}$. Each element $p\in\mathcal{P}_s$ is a function $p:\actions\rightarrow \Delta^S$ that determines the transition probabilities for all actions $a\in\actions$ at the state $s$. The intuitive interpretation is that the adversary can choose the worst-case transition probability for each state independently.

In S-rectangular RMDPs, the optimal value function $v\opt\in\real^S$ exists, is unique, and is the fixed-point of the S-rectangular robust Bellman optimality operator $\Bell_{\mathcal{P}} : \real^S \to \real^S$ that is defined for each $v\in\real^S$ and $s\in\states$ as~\cite{iyengar2005RMDP,Wiesemann2013RMDP}
\begin{equation} \label{eq:S-rect}
(\Bell_{\mathcal{P}} \, v)(s) = \max_{d\in\Delta^A} \min_{p \in \mathcal{P}_s } \; \sum_{a\in\actions} d_a \cdot \left(r_{s,a}  + \gamma \cdot p_{a}\tr v \right).
\end{equation}
The randomized decision rule $d$ in~\eqref{eq:S-rect} can be used to construct the optimal randomized policy. The optimal value function $v\opt$ can be computed using either value iteration~\cite{Wiesemann2013RMDP} or policy iteration~\cite{iyengar2005RMDP,Kaufman2013RobustMPI,Ho2018FastBellman} style algorithms. These algorithms can be scaled to infinite-state problems. For instance, Robust Projected Value Iteration (RPVI)~\cite{Tamar2014Scaling} does this for robust value iteration by combining it with linear function approximation.


\section{Static Soft-Robust Framework}
\label{sec:soft-robust}

In this section, we describe the \emph{static} soft robust criterion, discuss its relationship to the percentile criterion, show that optimizing it is NP hard, and compare it with its \emph{dynamic} counterpart. We propose to maximize the \emph{static soft-robust} objective $\srob:\Pi\to\Real$, defined as
\begin{equation} \label{eq:static_soft_robust}
\begin{gathered}
    \max_{\pi\in\Pi} \, \srob(\pi) := \\
 := (1-\lambda) \cdot \underbrace{\E{\rho(\pi,\hat{P})}}_{\text{mean return}} + \lambda \cdot \underbrace{\cvar^\alpha\left[\rho(\pi,\hat{P})\right]}_{\text{robust return}}.
\end{gathered}
\end{equation}
Here, $\cvar^\alpha$ represents the conditional value at risk at level $\alpha$, which is defined for any random variable $Z \sim g$ as~\citep{Rockafellar2000optimizationCVAR}
\begin{equation}\label{eq:cvar}
\cvar^{\alpha}_g[Z] \;:=\; \max_{b\in\real} \, \Big(b - \frac{\E{ \max \{b-Z, 0\} }}{1-\alpha}  \Big)\,.
\end{equation}
The robust return term $\cvar^\alpha\big[\rho(\pi,\hat{P})\big]$ in~\eqref{eq:static_soft_robust} represents the average of the expected returns of the worst $1-\alpha$ fraction of the models. The parameters $\alpha\in[0,1]$ and $\lambda\in[0,1]$ are domain specific and give the decision-maker fine-grained control over the policy's robustness. The parameter $\lambda\in [0,1]$ balances the importance of mean and robust returns. The risk-aversion parameter $\alpha \in [0,1]$ controls the robustness of the return of $\pi$. For example, when $\alpha = 0.95$, the robust return is computed by averaging the returns over the worst $5\%$ of the models.

Comparing the percentile criterion in~\eqref{eq:percentile_var} with the soft-robust criterion in \eqref{eq:static_soft_robust}, one can appreciate how the soft-robust criterion addresses the issues that arise with the percentile criterion: The soft-robust criterion explicitly includes the mean performance (weighted by $1-\lambda$) and $\cvar$ is both, sensitive to the tail of the distribution and convex (e.g.,~\citealt{Shapiro2014StochasticProgramming}).

The following results establish fundamental properties of the soft-robust objective $\rho^S(\pi)$. First, the following proposition justifies the need to consider randomized policies when optimizing this objective.
\begin{prop} \label{prop:randomized_soft_robust}
    There may be no stationary \emph{deterministic} policy $\pi\in\Pi_D$ that attains the optimal objective of the soft-robust optimization problem~\eqref{eq:static_soft_robust}.
\end{prop}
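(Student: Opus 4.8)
The claim is an existence statement, so the plan is to \emph{exhibit a single counterexample}: a batch RL instance with a finitely-supported posterior $\Pf$ on which the optimal soft-robust value is not achieved by any $\pi\in\Pi_D$. The conceptual driver is that $\cvar^\alpha$ is a concave, risk-averse functional of the return distribution over models, so mixing actions lets the policy \emph{hedge} across the sampled models $\omega\in\Omega$, shrinking the spread of $\rho(\pi,\hat{P})$ without changing its mean, and thereby lifting the robust term. I therefore only need to find one instance and one randomized policy whose soft-robust value strictly exceeds that of every deterministic policy.

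Concretely, I would take $\states=\{s_0,s_+,s_-\}$ and $\actions=\{a_1,a_2\}$, with $s_+,s_-$ absorbing and the (known) reward arranged so that the expected discounted return equals $1$ if the agent reaches $s_+$ from $s_0$ and $0$ if it reaches $s_-$. Let $\Pf$ put weight $\tfrac12$ on each of two models $\omega_1,\omega_2$: under $\omega_1$, action $a_1$ sends $s_0\to s_+$ while $a_2$ sends $s_0\to s_-$, and under $\omega_2$ the roles are swapped. Then the deterministic policy ``always $a_1$'' induces the return $1$ under $\omega_1$ and $0$ under $\omega_2$ (and symmetrically for ``always $a_2$''), whereas the uniform randomized policy $\pi_{1/2}$ reaches $s_+$ and $s_-$ each with probability $\tfrac12$ under \emph{either} model, so $\rho(\pi_{1/2},\hat{P})=\tfrac12$ $\Pf$-almost surely.

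Finishing the argument is a short computation with $\alpha=\tfrac12$ and any $\lambda\in(0,1]$. Both deterministic policies have mean return $\tfrac12$ and, evaluating the definition of $\cvar^\alpha$ on the two-point support $\{0,1\}$, $\cvar^{1/2}=0$, giving soft-robust value $(1-\lambda)\tfrac12$. The policy $\pi_{1/2}$ has the same mean $\tfrac12$ but $\cvar^{1/2}\big[\rho(\pi_{1/2},\hat{P})\big]=\tfrac12$ (the variable is constant), hence soft-robust value $\tfrac12>(1-\lambda)\tfrac12$; since the two deterministic policies are symmetric and both fall below this, the optimum is not attained in $\Pi_D$. The only step I would check with care — and the genuine crux — is the $\cvar$ evaluation on the discrete support, namely confirming that the worst-$(1-\alpha)$ average isolates exactly the $0$ atom for the deterministic policies and the constant $\tfrac12$ for $\pi_{1/2}$, together with verifying that the means coincide so the strict gain comes purely from the robust term; the discounting and absorbing-state bookkeeping are routine.
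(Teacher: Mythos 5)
Your proof is correct, but it takes a genuinely different route from the paper. The paper does not construct an example at all: it observes that setting $\lambda=0$ (or $\alpha=0$) reduces the soft-robust objective \eqref{eq:static_soft_robust} to the mean return over models, $\mathbb{E}[\rho(\pi,\hat{P})]$, and then invokes \emph{Theorem~2} of \citet{bucholz2020CMFMP}, which shows that for this multi-model (concurrent MDP) mean criterion, randomized stationary policies can strictly outperform deterministic ones. Your counterexample lives in the complementary parameter regime: in your instance every policy has the same mean $\tfrac12$, so for $\lambda=0$ deterministic policies are optimal, and the strict gap $\tfrac12 > (1-\lambda)\tfrac12$ is produced entirely by the $\cvar$ term with $\lambda>0$ and $\alpha=\tfrac12$ (your $\cvar$ evaluations are right: the two-point return $\{0,1\}$ with equal atoms gives $\cvar^{1/2}=0$, while the constant return of the uniform policy gives $\cvar^{1/2}=\tfrac12$, and the atom mass exactly matching $1-\alpha$ avoids any tie-breaking subtlety). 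Since the proposition is existential over instances, either regime suffices. What each approach buys: the paper's reduction is one line and additionally shows that randomization is needed even without any risk aversion, but it outsources the construction to external literature; your argument is self-contained, elementary, and directly exhibits the hedging mechanism — randomizing actions collapses the spread of $\rho(\pi,\hat{P})$ across models, which the concave $\cvar$ functional rewards — which is arguably the more instructive explanation of why the soft-robust criterion specifically demands randomized policies.
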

\cref{prop:randomized_soft_robust} follows immediately from {\em Theorem~2} in~\citet{bucholz2020CMFMP} by setting $\lambda =0$ or $\alpha = 0$ in~\eqref{eq:static_soft_robust}. Similar argument shows that history-dependent randomized policies may further outperform stationary ones~\cite{Steimle2018MM}.
    The following proposition establishes the computational complexity of the optimization problem~\eqref{eq:static_soft_robust}.

\begin{prop} \label{prop:soft_robust_static_nphard}
    Computing the optimal (randomized or deterministic) policy of the soft-robust problem~\eqref{eq:static_soft_robust} is NP-hard.
\end{prop}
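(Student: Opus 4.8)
The plan is to obtain NP-hardness by exhibiting problem~\eqref{eq:static_soft_robust} as a strict generalization of a problem already known to be NP-hard, so that hardness is inherited from a special case rather than constructed from scratch. The natural target is the multi-model MDP (MMDP), in which a single policy must be chosen to maximize a weighted average of expected returns over a finite family of MDPs that share the state and action spaces but differ in their transition models~\cite{Steimle2018MM,bucholz2020CMFMP}. This is precisely the structure induced by the finite-sample posterior representation $\{(\hat{P}^\omega,\Pf_\omega)\}_{\omega\in\Omega}$ used throughout the paper.

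First I would specialize the parameters to kill the robust term. Setting $\lambda = 0$ (or, equivalently, $\alpha = 0$, under which $\cvar^0$ collapses to the expectation, as is immediate from~\eqref{eq:cvar} since $\mathbb{E}\left[(b-Z)^+\right]\ge b-\E{Z}$ forces the maximizing $b$ to the essential supremum) reduces the soft-robust objective to
\[ \srob(\pi) \;=\; \E{\rho(\pi,\hat{P})} \;=\; \sum_{\omega\in\Omega} \Pf_\omega \cdot \rho(\pi,\hat{P}^\omega). \]
This is exactly the MMDP objective: a posterior-weighted convex combination of the expected returns of $\pi$ in each sampled model $\hat{P}^\omega$. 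This is the same degenerate reduction that already underlies \cref{prop:randomized_soft_robust}, which is why that proposition can invoke the multi-model results of \citet{bucholz2020CMFMP}.

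Second I would argue that the embedding is faithful in both directions that matter. Every MMDP instance—an arbitrary finite collection of weighted transition models on a common $(\states,\actions)$—is directly an instance of~\eqref{eq:static_soft_robust} with $\lambda = 0$, and the map between instances is trivially polynomial-time computable. Hence a polynomial-time algorithm for the soft-robust problem would yield a polynomial-time algorithm for MMDP optimization, and invoking the established NP-hardness of the latter~\cite{Steimle2018MM,bucholz2020CMFMP} proves the claim. I would carry this through for both the randomized and the deterministic variants by choosing the cited hardness reduction whose optimal value is attained within the corresponding policy class, so that the decision version (``is there a policy with $\srob(\pi)\ge c$?'') transfers verbatim.

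The argument is deliberately lightweight, and the only routine checks are that $\lambda,\alpha\in[0,1]$ genuinely admits the boundary choices $\lambda=0$ or $\alpha=0$ and that the $\cvar$ term vanishes or reduces to the mean there. The main conceptual obstacle arises only if one instead insists on hardness in the genuinely soft-robust interior regime $\lambda,\alpha\in(0,1)$, where both terms are active: there the $\cvar$ term couples the models nonlinearly through the tail-averaging, and one would need a bespoke reduction engineered so that the tail term does not alter the identity of the optimal policy relative to the embedded hard instance. For the statement as phrased, which asserts NP-hardness of the problem over its admissible parameter range, the special-case embedding is sufficient and is the route I would take.
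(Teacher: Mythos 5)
Your proposal is correct and is essentially the paper's own argument: the paper proves \cref{prop:soft_robust_static_nphard} by exactly this specialization, setting $\lambda = 0$ or $\alpha = 0$ so that the objective collapses to the posterior-weighted multi-model (concurrent) MDP value, and then invoking the known NP-hardness of that problem (\emph{Theorem~1} of \citet{bucholz2019CMDP}, the same family of results as \citet{Steimle2018MM,bucholz2020CMFMP} which you cite). The only difference is the specific citation used for the multi-model hardness result, which is immaterial.
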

\cref{prop:soft_robust_static_nphard} follows readily from \emph{Theorem~1} in~\citet{bucholz2019CMDP} by setting $\lambda = 0$ or $ \alpha = 0$.

In the remainder of the section, we argue that our static formulation handles the uncertainty over $\hat{P}$ more accurately than prior dynamic formulations. Model uncertainty has serious consequences in RL. Increasing uncertainty in $\hat{P}$ causes the value function of a non-robust policy to become unrealistically optimistic. This effect, which is driven by always choosing the maximum over uncertain action value estimates, is known as \emph{optimizer's curse}~\cite{smith2006Optimizer}. Unrealistically high value function inherently drives the agent to prefer states with high model uncertainty which is undesirable in robust RL. Double Q-learning~\cite{vanhasselt2015deep} and other methods~\cite{Powell2011Approx,Buckman2021Pessimism} mitigate the curse but do not eliminate it.

We now show that the static soft-robust formulation with $\lambda = 0$ eliminates the optimizer's curse for the mean returns. The dynamic soft-robust formulations~\cite{Xu2012Distributionally,Derman2018Soft-RobustAC} and almost all other RL algorithms suffer from this curse.

To formally define the optimizer's curse~\cite{smith2006Optimizer}, recall that the random variable $P\opt$ represents the true transition probability used to  generate the dataset $\mathcal{D}$. The term \emph{post-decision surprise} refers to the difference $\rho(\bar{\pi}(\data), P\opt) - \bar{\rho}(\data)$ between the true return of $\bar{\pi}(\data)\in\Pi$ and its estimated return $\bar{\rho}(\data) \in \Real$. Note that both the policy $\bar{\pi}(\data)$ and its estimated return $\bar{\rho}(\data)$ depend on the dataset $\data$. If the average post-decision surprise is negative, $\Exp{\mathcal{D},P\opt}{\rho(\bar{\pi}(\mathcal{D}), P\opt) - \bar{\rho}(\mathcal{D})} < 0$, then the algorithm is said to suffer from the
\emph{optimizer's curse}. As described above, consistently optimistic (or biased) return estimates drive the agent to more uncertain states.

We are now ready to show that the static soft-robust formulation is immune to the optimizer's curse. Let $\bar\pi_S(\mathcal{D})$ denote an optimal solution to~\eqref{eq:static_soft_robust} for $\hat{P} = P\opt \ss \mathcal{D}$ and, similarly, let $\bar{\rho}_S(\mathcal{D})$ be the optimal objective value of~\eqref{eq:static_soft_robust}.
\begin{thm} \label{thm:no_bayesian_surprise}
    Optimal solution $\bar\pi_S(\mathcal{D})$ with objective $\bar{\rho}_S(\mathcal{D})$ to \eqref{eq:static_soft_robust} with $\lambda = 0$ has no expected post-decision surprise:
    \begin{equation*}
        \Exp{\mathcal{D},P\opt}{\rho(\bar{\pi}_S(\mathcal{D}), P\opt) - \bar{\rho}_S(\mathcal{D})} = 0 ~.
    \end{equation*}
    Moreover, the expected post-decision surprise is non-negative for any $\lambda \in (0,1]$ and any $\alpha \in [0,1]$.
\end{thm}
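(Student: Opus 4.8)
The plan is to exploit Bayesian consistency together with the law of iterated expectations. By construction $\hat{P} = P\opt \ss \data$ is the conditional law of $P\opt$ given $\data$, so for any policy $\pi$ that depends on $\data$ alone,
\begin{equation*}
\E{\rho(\pi,\hat{P}) \ss \data} = \Exp{P\opt \ss \data}{\rho(\pi,P\opt)} .
\end{equation*}
Since both $\bar\pi_S(\data)$ and $\bar\rho_S(\data)$ are deterministic functions of $\data$, the tower property $\Exp{\data,P\opt}{\,\cdot\,} = \Exp{\data}{\Exp{P\opt\ss\data}{\,\cdot\,}}$ reduces each claim to a pointwise statement in $\data$ about the optimal objective value, after which integrating over $\data$ preserves equalities and inequalities.

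First I would treat $\lambda = 0$. Then the objective in \eqref{eq:static_soft_robust} is exactly the posterior mean return, so optimality gives $\bar\rho_S(\data) = \Exp{P\opt\ss\data}{\rho(\bar\pi_S(\data),P\opt)}$. Conditioning the post-decision surprise on $\data$ and applying the displayed identity yields $\Exp{P\opt\ss\data}{\rho(\bar\pi_S(\data),P\opt) - \bar\rho_S(\data)} = 0$ for every $\data$; integrating over $\data$ then gives the claimed zero expected surprise.

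For $\lambda \in (0,1]$ and any $\alpha\in[0,1]$, writing $\bar\pi_S = \bar\pi_S(\data)$ for brevity, I would expand the optimal value through its defining convex combination,
\begin{equation*}
\bar\rho_S(\data) = (1-\lambda)\,\Exp{P\opt\ss\data}{\rho(\bar\pi_S,P\opt)} + \lambda\cdot\cvar^{\alpha}\bigl[\rho(\bar\pi_S,P\opt)\bigr],
\end{equation*}
where the $\cvar$ is taken under the posterior $P\opt\ss\data$. Subtracting this from the conditional mean $\Exp{P\opt\ss\data}{\rho(\bar\pi_S,P\opt)}$ leaves exactly $\lambda$ times the gap between the posterior mean and the posterior $\cvar$ of the return. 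The key inequality is $\cvar^{\alpha}[Z] \le \E{Z}$ for every integrable $Z$: from the variational form \eqref{eq:cvar}, each candidate $b$ satisfies $b - \tfrac{1}{1-\alpha}\E{\max\{b-Z,0\}} \le b - \E{\max\{b-Z,0\}} \le b - \E{b-Z} = \E{Z}$, since $\tfrac{1}{1-\alpha}\ge 1$ and $\max\{b-Z,0\}\ge b-Z$, while the boundary case $\alpha = 1$ reduces to the essential infimum of $Z$, which is likewise at most $\E{Z}$. Because $\lambda\ge 0$, the conditional surprise is non-negative for every $\data$, and integrating over $\data$ yields the stated non-negativity.

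The argument is short once the conditioning is set up correctly, and I expect the only real care to be in the first step. The subtle point is that $\bar\pi_S(\data)$ is itself data-dependent and possibly non-unique, so one must condition on $\data$ \emph{before} invoking the identity ``posterior equals conditional law,'' rather than attempting to commute the inner optimization with the outer expectation over $P\opt$. Everything else is the standard $\cvar \le$ mean bound applied uniformly in $\alpha$, plus monotonicity of expectation.
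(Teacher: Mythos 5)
Your proposal is correct and takes essentially the same route as the paper's proof: condition on $\mathcal{D}$ via the tower property, observe that for $\lambda=0$ the optimal objective value $\bar{\rho}_S(\mathcal{D})$ equals the posterior mean return of $\bar{\pi}_S(\mathcal{D})$ so the conditional surprise vanishes pointwise, and for $\lambda\in(0,1]$ use $\cvar^{\alpha}[Z]\le\E{Z}$ to get non-negative conditional surprise before integrating over $\mathcal{D}$. The only difference is cosmetic: you derive the $\cvar\le$ mean inequality from the variational form \eqref{eq:cvar} (including the $\alpha=1$ boundary case), whereas the paper simply cites it as a known fact.
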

The proof of the theorem can be found in~\cref{apps:soft_robust}.

To illustrate the implications of \cref{thm:no_bayesian_surprise}, \cref{fig:optimizers_curse} compares the post-decision surprise of the static soft-robust model with a dynamic model and an empirical method. We use a small MDP with $5$ states and $3$ actions with $P\opt$ sampled from the uniform Dirichlet prior and $|\data| = 100$ drawn from a random policy. The dynamic soft-robust criterion with $\lambda = 0$ solves $\max_{\pi\in\Pi} \rho(\pi,\mathbb E[\hat{P}])$~\cite{Derman2018Soft-RobustAC}. Note that the expectation is inside of the return rather than outside. The empirical method solves for $\max_{\pi\in\Pi} \rho(\pi,\bar{P})$, where $\bar{P}$ are empirical transition probabilities. The results show that the empirical solution consistently suffers from significant negative average post-decision surprise, which is slightly reduced by the dynamic formulation, and eliminated by the static formulation.

\begin{figure}
\centering
\includegraphics[width=0.8\linewidth]{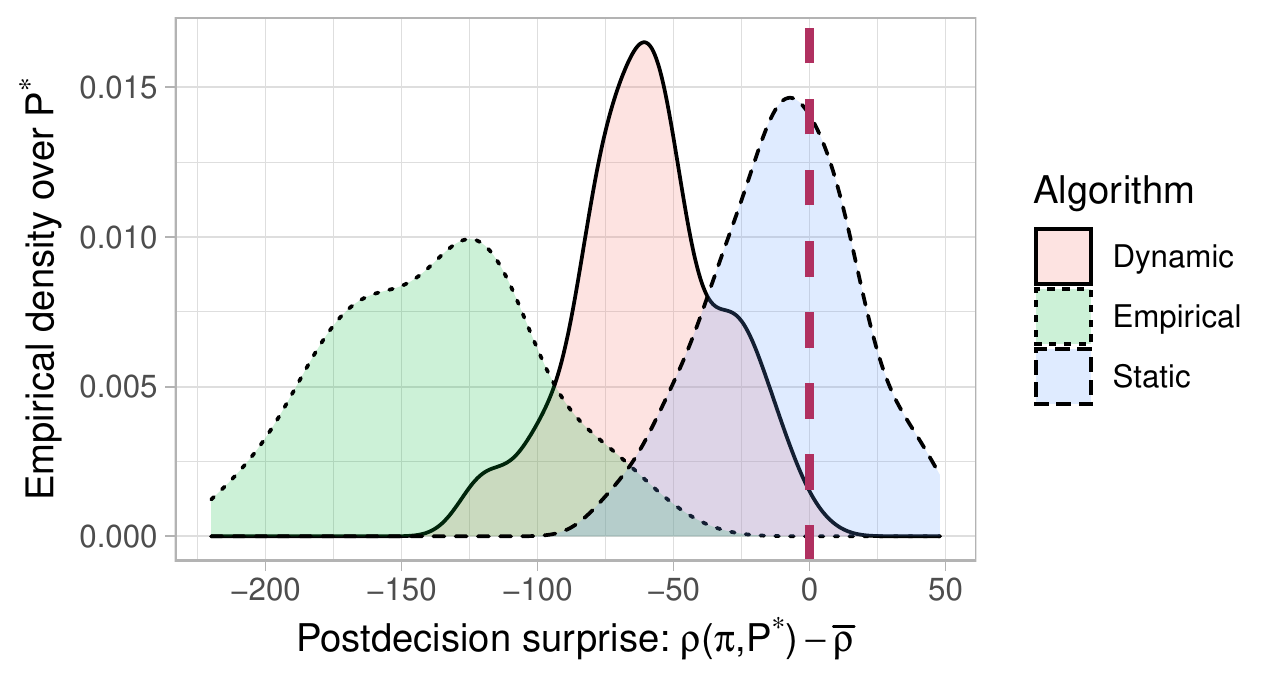}
\caption{Post-decision surprise of policies computed using static and dynamic soft-robust criteria and the empirical model.} \label{fig:optimizers_curse}
\end{figure}


\section{Soft-Robust Optimization}
\label{sec:Soft-Robust-optimization}

In this section, we present two approximate algorithms for maximizing the soft-robust objective. First, we derive a mixed-integer linear program~(MILP) formulation in~\cref{sec:MILP} that can be used to compute an optimal {\em deterministic} policy. Because MILPs do not scale well, we then show how the soft-robust objective can be represented as an RMDP in~\cref{sec:rmdp_formulation}, and solved with linear value function approximation in~\cref{sec:value_iteration}.

We start by stating the following lemma that shows the soft-robust criterion reduces to a worst-case expectation over a certain set of measures over the transition model $\hat{P}$.
\begin{lem} \label{prop:static_soft_robust}
    Define a set $\Xi \subseteq \Delta^{|\Omega|}$ as
    \begin{equation}  \label{eq:xi-set}
        \Xi = \Bigl\{ \xi\in\Delta^{|\Omega|} \ss (1-\lambda)\cdot \Pf \le \xi \le \frac{1 - \alpha + \lambda\alpha }{1-\alpha}\cdot \Pf \Bigr\}.
    \end{equation}
    Then, for each $\pi\in\Pi$, the objective $\rho^S(\pi)$ in~\eqref{eq:static_soft_robust} satisfies
    \begin{equation} \label{eq:non_rect_soft_robust}
        \rho^S(\pi) \;=\; \min_{\xi\in\Xi} \, \mathbb E_{\hat{P} \sim \xi}\big[\rho (\pi, \hat{P})\big]\,.
    \end{equation}
\end{lem}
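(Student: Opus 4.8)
The plan is to exploit the finite-sample structure of the problem. Since $\hat{P}$ is supported on the samples $\{\hat{P}^{\omega}\}_{\omega\in\Omega}$ with weights $\Pf_\omega$, both terms of $\srob(\pi)$ depend on the policy only through the scalars $\rho_\omega := \rho(\pi,\hat{P}^{\omega})$, so the whole objective is a finite-dimensional optimization over reweightings of $\Pf$. I would record the trivial identity $\E{\rho(\pi,\hat{P})} = \sum_{\omega}\Pf_\omega\rho_\omega$ and then focus on rewriting the $\cvar$ term.

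The crux is the dual (risk-envelope) representation of $\cvar$. Starting from the Rockafellar--Uryasev definition~\eqref{eq:cvar}, I would express $\E{\max\{b-Z,0\}}$ as a finite linear program in auxiliary variables $u_\omega \ge \max\{b-\rho_\omega,0\}$, so that $\cvar^{\alpha}[\rho(\pi,\hat{P})]$ is a joint maximization over $b$ and $u$. Taking the LP dual (strong duality holds, being a finite feasible LP) yields the standard envelope form
\[
\cvar^{\alpha}[\rho(\pi,\hat{P})] = \min_{\zeta}\Bigl\{\textstyle\sum_{\omega}\zeta_\omega\rho_\omega \ss \sum_{\omega}\zeta_\omega = 1,\; 0 \le \zeta_\omega \le \tfrac{\Pf_\omega}{1-\alpha}\Bigr\},
\]
i.e.\ $\cvar$ is a worst-case expectation over all measures whose density against $\Pf$ is capped at $1/(1-\alpha)$. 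A sanity check confirms the convention: $\alpha\to 1$ gives the worst case and $\alpha = 0$ forces $\zeta = \Pf$, i.e.\ the mean. Alternatively this identity can simply be cited from \citeasnoun{Shapiro2014StochasticProgramming}; re-deriving it keeps the argument self-contained.

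With the envelope in hand, I would substitute into~\eqref{eq:static_soft_robust}. Since the mean term does not depend on $\zeta$, it can be absorbed into the minimization, giving
\[
\srob(\pi) = \min_{\zeta}\, \sum_{\omega}\bigl[(1-\lambda)\Pf_\omega + \lambda\zeta_\omega\bigr]\rho_\omega
\]
over the same feasible set for $\zeta$. I would then apply the affine change of variables $\xi_\omega = (1-\lambda)\Pf_\omega + \lambda\zeta_\omega$ and verify that, for $\lambda\in(0,1]$, it is a bijection between the $\cvar$ envelope and the set $\Xi$ of~\eqref{eq:xi-set}: summing gives $\sum_\omega\xi_\omega = 1$; the constraint $\zeta_\omega\ge 0$ becomes the lower bound $\xi_\omega \ge (1-\lambda)\Pf_\omega$; and $\zeta_\omega \le \Pf_\omega/(1-\alpha)$ becomes $\xi_\omega \le \Pf_\omega\bigl[(1-\lambda)+\lambda/(1-\alpha)\bigr]$, which simplifies to $\tfrac{1-\alpha+\lambda\alpha}{1-\alpha}\Pf_\omega$. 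Because the objective reads $\sum_\omega \xi_\omega\rho_\omega = \Ex{\hat{P}\sim\xi}{\rho(\pi,\hat{P})}$, this establishes~\eqref{eq:non_rect_soft_robust}. The degenerate case $\lambda = 0$ is handled separately but trivially, since then $\Xi = \{\Pf\}$ and both sides collapse to the mean return.

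The main obstacle is pinning down the $\cvar$ dual representation correctly --- in particular matching the paper's quantile convention in~\eqref{eq:cvar} (larger $\alpha$ meaning more risk-averse) to the density cap $1/(1-\alpha)$, and confirming strong duality of the associated LP. Once the envelope is fixed, the remaining work is the affine reparametrization together with the one-line algebra showing that the coefficient $(1-\lambda)+\lambda/(1-\alpha)$ collapses to $(1-\alpha+\lambda\alpha)/(1-\alpha)$; these steps are routine.
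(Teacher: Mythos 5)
Your proof is correct and follows essentially the same route as the paper's: both rest on the dual (risk-envelope) representation of $\cvar$ and then identify the resulting feasible set with $\Xi$ via the affine substitution $\xi = (1-\lambda)\cdot \Pf + \lambda\cdot\zeta$ (the paper performs the inverse substitution $\xi_1 = \tfrac{1}{\lambda}(\xi - (1-\lambda)\Pf)$). The only cosmetic difference is that the paper phrases the combination step in support-function / Minkowski-sum language, whereas you absorb the constant mean term directly into the minimization --- which is the same computation, since the mean's envelope is the singleton $\{\Pf\}$.
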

The proof of~\cref{prop:static_soft_robust}, which we report in~\cref{app:soft_robust_algorithms}, follows by algebraic manipulation from the robust representation of $\cvar$ in \eqref{eq:cvar}. This result allows us to rewrite the {\em static soft-robust} optimization~\eqref{eq:static_soft_robust} as
\begin{equation}
    \label{eq:soft-robust2}
    \max_{\pi\in\Pi} \; \rho^S(\pi) = \max_{\pi\in\Pi} \; \min_{\xi\in\Xi} \; \mathbb E_{\hat{P} \sim \xi}\big[\rho (\pi, \hat{P})\big]\,.
\end{equation}
%


\subsection{Soft-Robust Optimization using MILP}\label{sec:MILP}

\begin{figure*}
    \centering
    \small
    \begin{tcolorbox}[colback=black!0.0!white]
        \begin{equation*} 
            \extrarowheight=1mm
            \begin{array}{>{\displaystyle}c>{\displaystyle}r@{\hspace{1.5mm}}>{\displaystyle}l>{\displaystyle}l}
                \operatorname*{maximize}_{\substack{\pi \in  \{0,1\}^{S \times A}, \, b\in\real,\\
                        u\in\Real^{S\times A\times N}_+,\, y\in \real^N_{+}}}
                &\multicolumn{3}{>{\displaystyle}l}{
                    \lambda \cdot \Big(b- \frac{1}{1-\alpha} \sum_{\omega \in \Omega} y(\omega)\Big) + (1-\lambda) \cdot \sum_{s \in \states} \sum_{a \in \actions} \sum_{\omega \in \Omega} u(s,a,\omega) \sum_{s' \in \states} r(s,a,s') \cdot P^{\omega}(s,a,s')}\\
                \operatorname{subject\,to} &y(\omega) - b \cdot f_{\omega} &\geq\; - \sum_{s \in \states} \sum_{a \in \actions}u(s,a,\omega) \sum_{s' \in \states} P^{\omega}(s,a,s') \cdot r(s,a,s'), &\omega \in \Omega, \\
                &\sum_{a \in \actions} u(s,a,\omega) &=\; \sum_{s' \in \states}\sum_{a' \in \actions} \gamma\cdot u(s',a',\omega)\cdot P^{\omega}(s',a',s) + f_{\omega} \cdot p_0(s), &s \in \states,\, \omega \in \Omega, \\
                &\sum_{a\in\actions} \pi(s,a) &=\; 1, &s \in \states, \\
                &u(s,a,\omega) &\leq\; f_\omega \cdot \pi(s,a) / (1-\gamma),  &s \in \states,\, a \in \actions,\, \omega \in \Omega.
            \end{array}
            \extrarowheight=0mm
        \end{equation*}
    \end{tcolorbox}
    \vspace{-0.3cm}
    \caption{SR-MILP: Mixed-Integer linear program that solves the soft-robust optimization problem~\eqref{eq:soft-robust2}.} \label{fig:MILP}
\end{figure*}

Our MILP formulation of the soft-robust optimization, which we call SR-MILP and present it in~\cref{fig:MILP}, is based on~\eqref{eq:soft-robust2}. The intuitive explanation for this formulation is as follows. One may compute the soft-robust objective by simultaneously solving a series of MDPs with transition functions given by $\hat{P}^\omega$, one for each $\omega \in \Omega$. The variable $u(s,a,\omega) \in \real_+$ in \cref{fig:MILP} represents the occupancy frequency for the state $s$ and action $a$ in the MDP given by $\omega$. The second constraint ensures that $u$ is a valid occupancy frequency. The binary variable $\pi(s,a)\in\{0,1\}$ (deterministic policy), is used to guarantee that the policy is consistent across the MDPs $\hat{P}^\omega$ by enforcing the fourth constraint. The fourth constraint ensures that $u(s,a,\omega) > 0 \Leftrightarrow \pi(s,a) = 1$. Finally, the variables $b$ and $y$ and the first constraint are used to represent the $\cvar$ formulation in~\eqref{eq:cvar}. We are now ready to state the correctness of our formulation in~\cref{prop:milp_static}, whose proof we report in~\cref{app:soft_robust_algorithms}.
\begin{prop} \label{prop:milp_static}
    Any $\pi\opt$ optimal in \cref{fig:MILP} satisfies that $\pi\opt\in \arg\max_{\pi\in\Pi_D} \srob(\pi)$.
\end{prop}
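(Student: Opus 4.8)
The plan is to show that, for every fixed binary policy $\pi$ that is feasible in \cref{fig:MILP} (i.e.\ satisfies the third constraint $\sum_{a}\pi(s,a)=1$, so that $\pi\in\Pi_D$), the inner optimization over $u$, $b$, and $y$ has optimal value exactly $\srob(\pi)$. Since the outer maximization ranges over precisely the set of deterministic policies $\Pi_D$, it then follows that the MILP computes $\max_{\pi\in\Pi_D}\srob(\pi)$ and that any maximizer $\pi\opt$ lies in $\argmax_{\pi\in\Pi_D}\srob(\pi)$. I would carry this out in two steps: first identifying what the occupancy variables $u$ must be, and then recognizing the $b,y$ block as the dual CVaR program \eqref{eq:cvar}.

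\emph{Step 1 (occupancy variables encode the returns).} Fix a deterministic $\pi$ and a model $\omega$. The fourth constraint forces $u(s,a,\omega)=0$ whenever $\pi(s,a)=0$, so $u(\cdot,\cdot,\omega)$ is supported only on the single action chosen by $\pi$ in each state; writing $x_s=u(s,\pi(s),\omega)$, the second (flow) constraint reduces to the linear system $(\eye-\gamma (P^\pi_\omega)\tr)\,x = f_\omega\, p_0$, where $P^\pi_\omega$ is the state-to-state transition matrix induced by $\pi$ under $\hat{P}^\omega$. Because $\gamma\in(0,1)$, the matrix $\eye-\gamma(P^\pi_\omega)\tr$ is invertible, so $u(\cdot,\cdot,\omega)$ is \emph{uniquely determined} and equals the occupancy frequency of $\pi$ under $\hat{P}^\omega$ scaled by $f_\omega$. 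Summing the flow constraint over $s$ confirms the total mass $\sum_{s,a}u(s,a,\omega)=f_\omega/(1-\gamma)$, which is why the fourth constraint's upper bound is never binding and only enforces the support restriction. Consequently the reward sum in both the objective and the first constraint evaluates to $\sum_{s,a}u(s,a,\omega)\sum_{s'}r(s,a,s')P^\omega(s,a,s') = f_\omega\cdot\rho(\pi,\hat{P}^\omega)$, and the $(1-\lambda)$ term of the objective equals $(1-\lambda)\sum_\omega f_\omega\,\rho(\pi,\hat{P}^\omega)=(1-\lambda)\,\E{\rho(\pi,\hat{P})}$.

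\emph{Step 2 (the $b,y$ block is the CVaR dual) and conclusion.} Substituting $f_\omega\rho(\pi,\hat{P}^\omega)$ into the first constraint gives $y(\omega)\ge f_\omega(b-\rho(\pi,\hat{P}^\omega))$, and together with $y\ge 0$ and $f_\omega\ge 0$ this reads $y(\omega)\ge f_\omega\max\{b-\rho(\pi,\hat{P}^\omega),0\}$. Since $y$ enters the objective with the nonpositive coefficient $-\lambda/(1-\alpha)$, at optimality $y(\omega)$ takes this lower bound, so $\sum_\omega y(\omega)=\E{\max\{b-\rho(\pi,\hat{P}),0\}}$. The $\lambda$-term of the objective then becomes $\lambda\big(b-\tfrac{1}{1-\alpha}\E{\max\{b-\rho(\pi,\hat{P}),0\}}\big)$, whose maximum over $b\in\real$ is exactly $\lambda\cdot\cvar^\alpha[\rho(\pi,\hat{P})]$ by the definition \eqref{eq:cvar}. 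Adding the two blocks yields $(1-\lambda)\E{\rho(\pi,\hat{P})}+\lambda\,\cvar^\alpha[\rho(\pi,\hat{P})]=\srob(\pi)$, completing the per-policy argument and hence the proposition.

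I expect the main obstacle to be Step 1: justifying rigorously that the feasible set for $u$ collapses to the unique policy-evaluation occupancy measure, so that the maximization over $u$ cannot inflate the $(1-\lambda)$ mean-return term by selecting a non-occupancy flow. This rests on the support restriction imposed by the binary consistency constraint together with the invertibility of $\eye-\gamma(P^\pi_\omega)\tr$; care is also needed to track the $f_\omega$ scaling consistently so that the weighted finite sums over $\omega$ coincide with expectations under the sampled posterior measure $f$.
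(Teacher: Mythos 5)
Your proof is correct, but it is organized in the opposite direction from the paper's. The paper derives the MILP top-down: it writes $\srob(\pi)$ as an exact optimization problem with the bilinear consistency constraint $u(s,a,\omega)=\pi(s,a)\cdot\sum_{a'}u(s,a',\omega)$ (via the dual occupancy-measure LP of Puterman plus the $\cvar$ variational form), and then replaces that bilinear equality by the McCormick bound $u(s,a,\omega)\le \pi(s,a)\cdot f_\omega/(1-\gamma)$, invoking the fact that this relaxation is exact when $\pi(s,a)\in\{0,1\}$. You instead verify bottom-up that for each \emph{fixed} binary feasible $\pi$ the inner program over $(u,b,y)$ has value exactly $\srob(\pi)$: nonnegativity plus the fourth constraint force the support of $u(\cdot,\cdot,\omega)$ onto the actions chosen by $\pi$, the flow constraints then determine $u$ uniquely as $f_\omega$ times the occupancy frequency (invertibility of $\eye-\gamma (P^{\pi}_\omega)\tr$), and the $(b,y)$ block reproduces \eqref{eq:cvar}. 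The mathematical ingredients (occupancy LP, $\cvar$ dual, support-forcing at binary $\pi$) coincide, but the arguments buy different things: the paper's derivation explains where the formulation and the coefficient $f_\omega/(1-\gamma)$ come from, while its exactness claim rests on a terse appeal to McCormick precision at extreme points; your uniqueness argument makes that exactness rigorous and explicitly rules out the one real danger in the relaxed program, namely that the maximization over $u$ could inflate the mean-return term by choosing a non-occupancy flow. One cosmetic imprecision: the upper bound $u(s,\pi(s),\omega)\le f_\omega/(1-\gamma)$ can hold with equality in degenerate cases (e.g., an absorbing initial state), so "never binding" should read "always satisfied by the unique flow solution"; this does not affect the argument.
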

The MILP in \cref{fig:MILP} returns deterministic policies. Although this may seem like a limitation, it actually offers some tangible advantages. In practice, deterministic policies are often preferred over randomized ones, when randomizing between different actions is undesirable. In medical domains, for example, it may be unethical to randomize outside of a medical trial. In other domains, randomization hinders reproducibility and may make it very difficult to evaluate and diagnose the policy once it is deployed.

Although the MILP in \cref{fig:MILP} returns stationary policies, they can still benefit from the static uncertainty assumption in \eqref{eq:static_soft_robust}. To illustrate this point, consider a cancer treatment problem, where the agent has to decide on the amount of chemotherapy to be administered. The fact that a particular state of the patient reveals some information their response to the treatment can be used to make more informed decisions. The dynamic uncertainty model, on the other hand, assumes that the patient model changes throughout the execution and cannot exploit this information.


\subsection{Soft-Robust Optimization using RMDPs} \label{sec:rmdp_formulation}

In this section, we describe how to construct an RMDP that approximates the soft-robust criterion.
First we note that~\citeasnoun{Xu2012Distributionally} showed that optimizing any coherent risk measure of the return (including the soft-robust criterion) is equivalent to solving an RMDP. However, there are three main differences between their results and ours: {\em 1)} To show the equivalence to an RMDP problem,~\citeasnoun{Xu2012Distributionally} assume that each state is visited only once within an episode. This assumption is too strong and does not hold for infinite-horizon MDPs with finite state spaces. Hence, we propose the dynamic soft-robust objective in this section as an approximation to the original soft-robust criterion, and then in~\cref{sec:approximation_error}, bound the error due to this approximation. {\em 2)} We derive explicit RMDP ambiguity sets for soft-robust criterion instead of an abstract representation, as in~\citeasnoun{Xu2012Distributionally}. {\em 3)} We present a scalable algorithm to solve the dynamic soft-robust objective in~\cref{sec:value_iteration}.

Our soft robust reduction to a RMDP proceeds in two steps.

{\em Step~1:} Approximate the soft-robust optimization~\eqref{eq:soft-robust2} as
\begin{equation} \label{eq:dynamic_soft_robust}
\max_{\pi\in\Pi}\; \rho^D(\pi) \;=\; \max_{\pi\in\Pi}\; \min_{\xi\in\Xi} \; \rho \big(\pi, \mathbb {E}_{\hat{P} \sim \xi}[\hat{P}]\big)\,.
\end{equation}
where the superscript $D$ indicates that this is the ambiguity set corresponding to the dynamic soft-robust formulation. Note that, in contrast to $\rho^S(\pi)$ in~\eqref{eq:soft-robust2}, the expectation in~\eqref{eq:dynamic_soft_robust} is inside the return function $\rho$. This approximation is helpful because it can be represented as a non-rectangular RMDP (see~\eqref{eq:rmdp_general}) with the ambiguity set 
\begin{equation} \label{eq:non_rect_ambiguity}
\mathcal{P}^D \subseteq (\Delta^S)^{\states\times\actions}, \; \mathcal{P}^D = \Big\{ \sum_{\omega\in\Omega} \xi_\omega \cdot \hat{P}^\omega \ss \xi\in\Xi \Big\}\,.
\end{equation}

{\em Step~2:} Although solving a non-rectangular RMDP is NP-hard~\cite{Wiesemann2013RMDP}, it can be turned into a tractable rectangular one by a process called \emph{rectangularization} in the context of dynamic risk measures~\cite{Roorda2004CoherentAM,iancu2011TightApprox}. Rectangularization constructs the smallest rectangular set that contains the entire non-rectangular one. Since the rectangular set is a superset of the non-rectangular one, the rectangular robust objective lower-bounds its non-rectangular counterpart.

To formalize the rectangularization procedure, assume that the soft-robust ambiguity set, which we denote by $\mathcal{P}^R$, is S-rectangular, i.e.,\footnote{This ambiguity set decomposition is similar to the one in~\eqref{eq:s-rect}. The mnemonic superscript $R$ in $\mathcal{P}^R$ indicates that it is a S-rectangular ambiguity set.}
\begin{equation}
\label{eq:ambiguity_rect}
\mathcal{P}^R = \bigotimes_{s \in \states} \mathcal{P}_{s}^R, \quad \mathcal{P}_s^R = \Big\{ \sum_{\omega \in \Omega} \xi_{\omega} \cdot \hat{P}_s^{\omega} \ss \xi \in \Xi \Big\}.
\end{equation}
Here $\Xi$ is defined in~\eqref{eq:xi-set} and $\hat{P}_s^{\omega} \in
(\Delta^S)^A$ is a value of the posterior sample $\hat{P}^{\omega}$ in state $s$. Recall from Section~\ref{sec:motivation} that the S-rectangular RMDPs can be solved efficiently~\cite{Ho2018FastBellman}. 
Finally, the following optimization problem defines the S-rectangular objective $\rrob : \Pi \to \Real$:
\begin{equation} \label{eq:objective_rectangular}
\rrob(\pi) \;=\; \min_{P \in \mathcal{P}^R} \; \rho \left(\pi, P \right)\,.
\end{equation}
The following proposition (proof in \cref{app:soft_robust_algorithms}) shows how the non-rectangular ambiguity set $\mathcal{P}^D$ in~\eqref{eq:non_rect_ambiguity} and its corresponding return $\rho^D$, defined in~\eqref{eq:dynamic_soft_robust}, are related to the S-rectangular ambiguity set $\mathcal{P}^R$ and return $\rrob$.
\begin{prop}  \label{thm:s_rect_soft_robust}
The ambiguity sets $\mathcal{P}^D$ and $\mathcal{P}^R$ satisfy $\mathcal{P}^D \subseteq \mathcal{P}^R$, and their corresponding returns $\rho^D$ and $\rrob$ satisfy $\rrob(\pi) \le \drob(\pi)$, for each $\pi\in\Pi$.
\end{prop}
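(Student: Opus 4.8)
The plan is to prove the two claims in sequence, deriving the value inequality $\rrob(\pi)\le\drob(\pi)$ as a direct consequence of the set inclusion $\mathcal{P}^D\subseteq\mathcal{P}^R$ together with the observation that $\drob(\pi)$ is exactly the worst-case return over the non-rectangular set $\mathcal{P}^D$.

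First I would establish $\mathcal{P}^D\subseteq\mathcal{P}^R$. Take any $P\in\mathcal{P}^D$. By the definition in~\eqref{eq:non_rect_ambiguity}, there is a single weight vector $\xi\in\Xi$ with $P=\sum_{\omega\in\Omega}\xi_\omega\cdot\hat{P}^\omega$, so in particular its restriction to each state satisfies $P_s=\sum_{\omega\in\Omega}\xi_\omega\cdot\hat{P}_s^\omega$ with the \emph{same} $\xi$. Comparing with~\eqref{eq:ambiguity_rect}, this shows $P_s\in\mathcal{P}_s^R$ for every $s\in\states$, and hence $P\in\bigotimes_{s\in\states}\mathcal{P}_s^R=\mathcal{P}^R$. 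The only point worth checking is that these averaged models are genuine transition kernels: since $\xi\in\Delta^{|\Omega|}$ and each $\hat{P}^\omega\in(\Delta^S)^{\states\times\actions}$, the convex combination $\sum_\omega\xi_\omega\cdot\hat{P}^\omega$ again lies in $(\Delta^S)^{\states\times\actions}$, so $\rho(\pi,\cdot)$ is well defined on both sets.

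Next I would identify $\drob(\pi)$ with the worst-case return over $\mathcal{P}^D$. Because $\xi\in\Delta^{|\Omega|}$, we have $\mathbb{E}_{\hat{P}\sim\xi}[\hat{P}]=\sum_{\omega\in\Omega}\xi_\omega\cdot\hat{P}^\omega$, and as $\xi$ ranges over $\Xi$ the map $\xi\mapsto\sum_\omega\xi_\omega\cdot\hat{P}^\omega$ sweeps out precisely the set $\mathcal{P}^D$, by the very definition~\eqref{eq:non_rect_ambiguity}. Therefore $\drob(\pi)=\min_{\xi\in\Xi}\rho\big(\pi,\mathbb{E}_{\hat{P}\sim\xi}[\hat{P}]\big)=\min_{P\in\mathcal{P}^D}\rho(\pi,P)$. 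Combining this with the inclusion just proved and the elementary monotonicity of the minimum under enlarging its feasible region yields $\rrob(\pi)=\min_{P\in\mathcal{P}^R}\rho(\pi,P)\le\min_{P\in\mathcal{P}^D}\rho(\pi,P)=\drob(\pi)$, which is the desired bound.

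I do not anticipate a genuine obstacle: the argument is essentially bookkeeping about which variables are shared across states. The one subtlety I would be careful about is keeping the per-state freedom straight, since in $\mathcal{P}^R$ the adversary may pick a different weight vector $\xi^{(s)}$ in each state, whereas $\mathcal{P}^D$ forces one common $\xi$, and it is exactly this relaxation that makes the rectangular minimum no larger than the non-rectangular one. I would also make sure the sweeping step relies on $\mathcal{P}^D$ being \emph{defined} as the image of $\Xi$ under the averaging map, so that $\drob(\pi)=\min_{P\in\mathcal{P}^D}\rho(\pi,P)$ is an exact equality rather than merely an inequality, and does not require any additional structural property of $\rho$.
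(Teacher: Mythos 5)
Your proposal is correct and takes essentially the same route as the paper's proof: the inclusion $\mathcal{P}^D\subseteq\mathcal{P}^R$ is established by decomposing any $P=\sum_{\omega\in\Omega}\xi_\omega\cdot\hat{P}^\omega$ state-wise with the common weight vector $\xi\in\Xi$ and invoking the Cartesian-product structure of $\mathcal{P}^R$, after which the value inequality follows from monotonicity of the minimum over the enlarged set. Your explicit identification $\drob(\pi)=\min_{P\in\mathcal{P}^D}\rho(\pi,P)$ and the check that the averaged models are valid transition kernels are details the paper leaves implicit, but the argument is the same.
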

Proposition~\ref{thm:s_rect_soft_robust} shows two important results. {\em First}, the S-rectangular ambiguity set $\mathcal{P}^R$ contains the non-rectangular ambiguity set $\mathcal{P}^D$ (rectangularization procedure). {\em Second}, the S-rectangular objective $\rrob$, which can be tractably computed by solving the resulting S-rectangular RMDP, is a lower-bound of the dynamic soft-robust objective $\rho^D$.

The optimal value function $v^R\in\real^S$ of the S-rectangular RMDP defined by $\mathcal{P}^R$ satisfies the robust Bellman optimality equation $v^R = \Bell_{\mathcal{P}^{\sr}} v^R$ and can be approximated using the standard value iteration algorithm (see Section~\ref{sec:motivation}).
For any $v\in\real^S$ and $s\in\states$, the Bellman optimality operator $(\Bell_{\mathcal{P}^{\sr}} v)(s)$ can be computed by solving the following linear program with $z_{s,a} = r_{s,a} + \gamma  \cdot v$:
\begin{equation} \label{eq:linear_program}
\begin{aligned}
&\max_{\substack{d \in \Delta^A,\,b\in\real\\y\in\real_+^{|\Omega|}}} \quad (1- \lambda) \sum_{\substack{a \in \actions\\\omega\in\Omega}} d_a  \Pf_\omega (\hat{P}_{s,a}^{\omega})\tr z_{s,a}  \\
&\qquad\qquad\qquad + \lambda \Big(b - \frac{1}{1-\alpha} \sum_{\omega\in\Omega} \Pf_\omega \cdot y_\omega \Big) \\
&\st \qquad y_\omega \geq b- \sum_{a \in \actions} d_a (\hat{P}_{s,a}^{\omega})\tr z_{s,a}, \quad \omega \in \Omega\,.
\end{aligned}
\end{equation}
\begin{prop} \label{prop:Bellman_update}
For any $v\in\real^S$ and $s\in\states$, the optimal value of the objective function in~\eqref{eq:linear_program} is equal to $(\Bell_{\mathcal{P}^{\sr}} v)(s)$.
\end{prop}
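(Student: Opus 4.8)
The plan is to begin from the definition of the S-rectangular robust Bellman optimality operator in \eqref{eq:S-rect} specialized to $\mathcal{P}^R$, rewrite its inner minimization in soft-robust form, dualize the $\cvar$, and collapse the nested maximizations into the single linear program \eqref{eq:linear_program}. First I would unfold the operator, writing the bracketed term of \eqref{eq:S-rect} in the next-state form used by the program, as
\[
  (\Bell_{\mathcal{P}^R} v)(s) = \max_{d \in \Delta^A} \min_{p \in \mathcal{P}^R_s} \sum_{a \in \actions} d_a \, p_a\tr z_{s,a},
\]
where $z_{s,a} = r_{s,a} + \gamma v$ is the vector with entries $z_{s,a}(s') = r(s,a,s') + \gamma v(s')$, so that $p_a\tr z_{s,a} = \sum_{s'} p_a(s')\,(r(s,a,s') + \gamma v(s'))$ is exactly the contribution of action $a$.

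Next I would substitute the parametrization of $\mathcal{P}^R_s$ from \eqref{eq:ambiguity_rect}: every $p \in \mathcal{P}^R_s$ has $p_a = \sum_{\omega} \xi_\omega \hat{P}^\omega_{s,a}$ for some $\xi \in \Xi$. Linearity in $\xi$ then lets me exchange the sums and identify the inner problem with $\min_{\xi \in \Xi} \sum_\omega \xi_\omega\, g_\omega(d)$, where $g_\omega(d) := \sum_{a} d_a (\hat{P}^\omega_{s,a})\tr z_{s,a}$ is the one-step value of the decision rule $d$ under the posterior sample $\omega$.

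The conceptual crux is the third step, where I invoke the algebraic identity that underlies \cref{prop:static_soft_robust}: for the weights $\Pf$ and the set $\Xi$ of \eqref{eq:xi-set}, one has $\min_{\xi \in \Xi} \sum_\omega \xi_\omega Z_\omega = (1-\lambda)\sum_\omega \Pf_\omega Z_\omega + \lambda\, \cvar^\alpha_{\Pf}[Z]$ for \emph{every} $Z \in \real^{|\Omega|}$. Because this identity is a statement purely about $\Xi$ and $\cvar$ rather than about returns, it applies verbatim with $Z_\omega = g_\omega(d)$; I would state this explicitly to justify reusing the lemma at the level of a single Bellman backup. Applying it turns the inner minimization into $(1-\lambda)\sum_\omega \Pf_\omega g_\omega(d) + \lambda\,\cvar^\alpha_{\Pf}[g(d)]$.

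Finally I would expand the $\cvar$ term through its dual representation \eqref{eq:cvar}, introducing $b \in \real$ and epigraph variables $y_\omega \ge 0$ with $y_\omega \ge b - g_\omega(d)$, and substitute $g_\omega(d)$ back in; this yields precisely the objective and constraint family of \eqref{eq:linear_program}. Since the mean term does not involve $(b,y)$ and the $\cvar$ dual is itself a maximization, the outer $\max_d$ and the inner $\max_{b,y}$ merge into one joint maximization over $(d,b,y)$, recovering \eqref{eq:linear_program} exactly. The only points needing care are transferring the soft-robust identity to the per-sample value $g_\omega(d)$ and justifying this final merging of maximizations; everything else is rearrangement of finite sums.
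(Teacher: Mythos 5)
Your proposal is correct and follows essentially the same route as the paper's proof, which simply unfolds the operator from \eqref{eq:S-rect} with $\mathcal{P}^R_s$ and asserts that ``simple algebraic manipulations'' yield the dual formulation \eqref{eq:linear_program}; your steps (parametrizing $\mathcal{P}^R_s$ by $\xi\in\Xi$, applying the identity behind \cref{prop:static_soft_robust} to the per-sample values $g_\omega(d)$, dualizing $\cvar$ via \eqref{eq:cvar}, and merging the maximizations) are precisely those manipulations spelled out. Your explicit observation that the soft-robust identity holds for an arbitrary vector $Z\in\real^{|\Omega|}$, not just vectors of returns, is the right justification and is implicitly relied upon by the paper.
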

The correctness of~\cref{prop:Bellman_update} follows from algebraic manipulation of~\eqref{eq:ambiguity_rect} and is provided in~\cref{app:soft_robust_algorithms}. \newline


\subsection{Projected Soft-Robust Value Iteration} \label{sec:value_iteration}

We now present our soft-robust value iteration (SRVI) algorithm that we use to (approximately) solve the soft-robust S-rectangular RMDP defined in Section~\ref{sec:rmdp_formulation}. SRVI, whose pseudo-code is shown in \cref{alg:norbu_s}, generalizes the Robust Projected Value Iteration (RPVI) algorithm~\cite{Tamar2014Scaling} to soft-robust S-rectangular RMDPs.

We use the linear approximation $v(s) = \phi(s) \tr \w$ for the soft-robust value function $v \in \real^{S}$, where $\phi(\cdot) \in \real^{l},\;l \ll S$ is an $l$-dimensional feature vector and $\w \in \real^{l}$ is a weight vector. We denote by $\Phi \in \real^{M \times l}$ the sample feature matrix of $\phi$ after observing $M$ samples, and by $h \in \Delta^{S}$ the steady state distribution of any given policy $\pi \in \Pi$ over states $s \in \mathcal{S}$. Further, let $\sigma_{\Phi^{\top} \w}: \real^S\to \real^S$ be the function obtained by applying the S-rectangular soft-robust Bellman optimality operator to a value function $v = \Phi \w$, i.e.,
\begin{equation} \label{eq:inner_optimization}
\begin{gathered}
\sigma_{\Phi^\top\w}(s) = (\Bell_{\mathcal{P}^{\sr}} v )(s) = (\Bell_{\mathcal{P}^{\sr}} (\Phi^{\top} \w) )(s).
\end{gathered}
\end{equation}
Then $\Sigma_{\Phi^{\top}\w}$ represents the vector of the soft-robust Bellman optimality values for matrix $\Phi$: $\{\sigma_{\Phi^{\top}\w}(s_t)\}_{t=1}^M$.
Finally, we denote by $\Psi$ the projection operator onto the subspace $\Phi$ w.r.t.~the $h$-weighted Euclidean norm.

\begin{algorithm}
\SetAlgoLined
\KwIn{confidence $\alpha$, risk factor $\lambda$, distribution $\Pf$}
    \KwOut{soft-robust value function $v$
}
    {\em Initialize:} weight vector $w_0$; $\;$ counter $k\gets1$\;
    Sample $N$ parametric models $\{\hat{P}_{\theta}^{\omega_i}\}_{i=1}^N$ from $\Pf$ \;
    Compute mean $\bar{P}_{\theta} = \mathbb{E}[\hat{P}_{\theta}]$ from  $\{\hat{P}_{\theta}^{\omega_i}\}_{i=1}^N$\;
    \Repeat{$\;\norm{\Phi_k\tr \w_{k} - \Phi_k\tr \w_{k-1}}_{\infty} \le \epsilon$}
    {
        Simulate episodes following $\bar{P}_{\theta}$ and policy from~\eqref{eq:inner_optimization} to get
        samples $\mathcal{D}_k$ and $\Phi_k$\;
        Compute $w_k$ from~\eqref{eq:sr_linear} using $\Phi = \Phi_k$\;
        $k\gets k+1$ \;
    }
    \Return{$\;v=\Phi_k \w_k$}
    \caption{Soft-Robust Value Iteration~(SRVI)}
    \label{alg:norbu_s}
\end{algorithm}

SRVI approximates $\pi_{R}\in\arg\max_{\pi\in\Pi} \rho^R(\pi)$ (see Eq.~\ref{eq:objective_rectangular}) by iteratively solving the projected soft-robust Bellman optimality equation $v =\Psi \, \Bell_{\mathcal{P}^{\sr}} \, v$. In each iteration $k$, we first simulate episodes using the mean transition probability model $\bar{P}_{\theta}$ and construct the dataset $\mathcal D_k$ of size $M$ to approximately represent the stationary state distribution induced by the current policy.
Then, we update the weight vector $\w_k$ using the reconstructed data as
\begin{equation} \label{eq:sr_linear}
\w_{k} = (\Phi\tr H \Phi)^{-1} (\Phi\tr H P \, \Sigma_{\Phi^{\top}\w_{k-1}}),
\end{equation}
where $H = \diag(h)$. Since it is impossible to exactly compute the terms in~\eqref{eq:sr_linear}, we approximate them using the Sample Average Approximation (SAA) as
\begin{equation} \label{eq:rpvi_updates}
\begin{aligned}
    \Phi\tr H \Phi &\;\sim\; \frac{1}{M} \sum_{t=1}^{M} \phi(s_t) \phi(s_t)\tr \;, \\
    \Phi\tr H P_{\theta} \Sigma_{\Phi^{\top}\w} &\;\sim\; \frac{1}{M} \sum_{t=1}^{M} \sigma_{\Phi^{\top}\w}(s_t).
    \end{aligned}
\end{equation}
The optimization problem in~\eqref{eq:inner_optimization} can be solved by formulating it as a linear program in~\eqref{eq:linear_program}, and using the SAA method to approximate the value function $v$. We repeatedly update the weight vector $\w$ using \eqref{eq:sr_linear} until the soft-robust value function $\Phi\tr \w$ converges to the unique projected fixed-point of $\Bell_{\mathcal{P}^{\sr}}$. Given the optimal weight vector $\w\opt$, the optimal policy for any state $s$ can be then computed by solving~\eqref{eq:inner_optimization}.


\section{RMDP Approximation Error}
\label{sec:approximation_error}

In this section, we derive approximation error bounds on the RMDP formulation described in \cref{sec:rmdp_formulation}. These bounds provide insight into the possible directions for further improvement of the formulation. The error introduced by resorting to the RMDP formulation depends on two main factors: 1) how the model uncertainty impacts the occupancy frequency, and 2) whether there exists some ordering of $\omega_1,\ldots,\omega_N \in \Omega$ such that the model $P^{\omega_i}$ is approximately better than $P^{\omega_{i+1}}$ consistently across the states.

The proof of the approximation error proceeds in the same two steps as in \cref{sec:rmdp_formulation}. The error introduced in the first step depends on the state occupancy frequency $h_{\pi}^{\omega} \in \Real^S$ for each $\pi\in\Pi$ and $\omega\in\Omega$ defined as
\begin{equation}\label{eq:occupancy_frequency}
h_{\pi}^{\omega} \;=\; \big( \eye - \gamma \cdot \hat{P}^{\omega\top}_\pi \big)^{-1} p_0~.
\end{equation}
We get the following bound on the \emph{first step}'s error.
\begin{thm}\label{lem:static_dynamic_difference}
The difference between static and dynamic returns is bounded for each $\pi\in\Pi$ as
\[
\left\lvert \drob(\pi) - \srob(\pi) \right\rvert \;\le\; \frac{\gamma \cdot r_{\max}}{1-\gamma} \cdot \epsilon_1(\pi)~,
\]
where
$
\epsilon_1(\pi) \;=\; \max_{\omega_1,\omega_2\in\Omega} \norm{h_{\pi}^{\omega_1} - h_{\pi}^{\omega_2}}_1~.
$
\end{thm}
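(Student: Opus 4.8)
The plan is to reduce the statement to a pointwise perturbation estimate and then control it through occupancy frequencies. First I would use \cref{prop:static_soft_robust} together with~\eqref{eq:dynamic_soft_robust} to write both objectives as minima over the \emph{same} set $\Xi$, namely $\srob(\pi)=\min_{\xi\in\Xi}\sum_{\omega}\xi_\omega\,\rho(\pi,\hat P^\omega)$ and $\drob(\pi)=\min_{\xi\in\Xi}\rho(\pi,\bar P_\xi)$ with $\bar P_\xi=\sum_{\omega}\xi_\omega\hat P^\omega$. Since $\lvert\min_\xi f(\xi)-\min_\xi g(\xi)\rvert\le\sup_{\xi}\lvert f(\xi)-g(\xi)\rvert$, it suffices to bound, uniformly over $\xi\in\Xi$, the gap $D(\xi):=\rho(\pi,\bar P_\xi)-\sum_{\omega}\xi_\omega\,\rho(\pi,\hat P^\omega)$ by $\tfrac{\gamma\, r_{\max}}{1-\gamma}\,\epsilon_1(\pi)$.

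Next I would pass to the occupancy representation $\rho(\pi,P)=(h_\pi^{P})\tr r_\pi^{P}$, where $h_\pi^{P}=(\eye-\gamma P_\pi\tr)^{-1}p_0$ as in~\eqref{eq:occupancy_frequency}, $P_\pi$ is the state-to-state matrix induced by $\pi$, and $r_\pi^{P}(s)=\sum_a\pi(s,a)\sum_{s'}P(s,a,s')\,r(s,a,s')$ is the expected one-step reward. The crucial structural fact is that $r_\pi^{P}$ is \emph{linear} in $P$, so $r_\pi^{\bar P_\xi}=\sum_\omega\xi_\omega r_\pi^\omega=:\bar r$, whereas $h_\pi^{P}$ is \emph{not} linear in $P$; this nonlinearity is the sole source of the gap. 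Writing $h_\pi^\omega:=h_\pi^{\hat P^\omega}$ and using $\sum_\omega\xi_\omega=1$, these two observations collapse $D(\xi)$ to the identity $D(\xi)=\sum_{\omega}\xi_\omega\,(h_\pi^{\bar P_\xi}-h_\pi^\omega)\tr r_\pi^\omega$, which already exposes occupancy-frequency differences and the reward bound $\norm{r_\pi^\omega}_\infty\le r_{\max}$.

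It then remains to control $h_\pi^{\bar P_\xi}-h_\pi^\omega$. I would split it through the \emph{mixture of occupancies} $\tilde h:=\sum_{\omega'}\xi_{\omega'}h_\pi^{\omega'}$, writing $h_\pi^{\bar P_\xi}-h_\pi^\omega=(h_\pi^{\bar P_\xi}-\tilde h)+(\tilde h-h_\pi^\omega)$. The second term is immediate: $\tilde h-h_\pi^\omega=\sum_{\omega'}\xi_{\omega'}(h_\pi^{\omega'}-h_\pi^\omega)$ is a convex combination of occupancy differences, so $\norm{\tilde h-h_\pi^\omega}_1\le\epsilon_1(\pi)$. For the first term I would apply the resolvent identity $A^{-1}-B^{-1}=B^{-1}(B-A)A^{-1}$ with $A=\eye-\gamma P_\pi^{\omega\top}$ and $B=\eye-\gamma\bar P_{\xi,\pi}^\top$; because $B-A=\gamma(\bar P_{\xi,\pi}^\top-P_\pi^{\omega\top})$ carries an explicit $\gamma$ and the resolvent $(\eye-\gamma\bar P_{\xi,\pi}^\top)^{-1}$ has $\ell_1$ operator norm at most $1/(1-\gamma)$ (Neumann series of a sub-stochastic matrix), re-expressing the transition differences through the $h_\pi^{\omega'}-h_\pi^\omega$ yields $\norm{h_\pi^{\bar P_\xi}-\tilde h}_1=O\!\big(\tfrac{\gamma}{1-\gamma}\epsilon_1(\pi)\big)$. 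Combined with $\norm{r_\pi^\omega}_\infty\le r_{\max}$ this controls $D(\xi)$.

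The step I expect to be the main obstacle is pinning down the \emph{sharp} leading constant, in particular the factor $\gamma$ claimed in $\tfrac{\gamma\, r_{\max}}{1-\gamma}$ rather than a looser $\tfrac{1}{1-\gamma}$ coming from the term $\norm{\tilde h-h_\pi^\omega}_1\le\epsilon_1$. The extra $\gamma$ must be extracted from the cancellation of the shared zeroth-order reward: resumming $\rho(\pi,P)=p_0\tr r_\pi^{P}+\gamma\,(h_\pi^{P})\tr P_\pi r_\pi^{P}$ and noting that both returns share the same $t=0$ contribution $p_0\tr\bar r$ gives the $\gamma$-factored identity $D(\xi)=\gamma\big[(h_\pi^{\bar P_\xi})\tr\bar P_{\xi,\pi}\bar r-\sum_\omega\xi_\omega(h_\pi^\omega)\tr P_\pi^\omega r_\pi^\omega\big]$. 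I would then symmetrize the resulting double sum over $\omega,\omega'$ to re-expose the pairwise occupancy differences $h_\pi^{\omega_1}-h_\pi^{\omega_2}$, and hence $\epsilon_1(\pi)$, while keeping the estimate uniform over $\xi\in\Xi$. Reconciling this $\gamma$-factored form with the occupancy estimates of the previous paragraph so as to land exactly on the stated constant is where essentially all of the care is required.
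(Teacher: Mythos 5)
Your overall skeleton is the same as the paper's. The reduction $\lvert \min_{\xi} f - \min_{\xi} g\rvert \le \sup_{\xi\in\Xi}\lvert f(\xi)-g(\xi)\rvert$ is exactly how the paper argues (it runs the two directions of the inequality through the two minimizers $\xi^S$ and $\xi^D$), and your resolvent bound on $h_\pi^{\bar P_\xi}-\tilde h$ is precisely the paper's key technical lemma (\cref{lem:convex_difference}), which is proved there from the fixed-point identities $h=\gamma P\tr h + p_0$ rather than the resolvent identity but yields the same constant $\frac{\gamma}{1-\gamma}\epsilon_1(\pi)$.

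The genuine gap is the step you flagged yourself, and it is not merely delicate: it cannot be completed in the setting you set up. By (correctly) tracking the model-dependence of the expected one-step reward $r_\pi^\omega$, you are forced into the decomposition $D(\xi)=\sum_\omega\xi_\omega\,(h_\pi^{\bar P_\xi}-h_\pi^\omega)\tr r_\pi^\omega$, whose second piece $\tilde h-h_\pi^\omega$ costs an additive $\epsilon_1(\pi)$, and with genuinely $s'$-dependent rewards the bound with the leading factor $\gamma$ is in fact \emph{false}, so no resummation or symmetrization can recover it. Concretely: take two states and one action, $p_0=(1,0)\tr$, $r(1,a,1)=1$ and all other rewards $0$; let $\hat P^1$ make both states absorbing and $\hat P^2$ send state $1$ to the absorbing state $2$; take $\Pf$ uniform and $\lambda=0$, so that $\Xi=\{\Pf\}$ is a singleton. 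Then $\drob(\pi)-\srob(\pi)=\rho(\pi,\mathbb E[\hat P])-\mathbb E[\rho(\pi,\hat P)]=\frac{1}{2-\gamma}-\frac{1}{2(1-\gamma)}$, which is of order $\gamma$, while $\epsilon_1(\pi)=\frac{2\gamma}{1-\gamma}$ makes the claimed right-hand side $\frac{2\gamma^2}{(1-\gamma)^2}$, of order $\gamma^2$; already at $\gamma=0.1$ the left side is about $0.029$ and the right side about $0.025$. The paper's proof goes through only because it silently uses a single reward vector $r_\pi$ common to all models, writing $\rho(\pi,\hat P^\omega)=(h_\pi^\omega)\tr r_\pi$ with no $\omega$ on $r_\pi$; under that convention $D(\xi)=(h_\pi^{\bar P_\xi}-\tilde h)\tr r_\pi$ exactly, your problematic second term never appears, and H\"older's inequality plus \cref{lem:convex_difference} deliver the constant $\frac{\gamma}{1-\gamma}$ in one application. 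So either make that hypothesis explicit (the expected one-step reward under $\pi$ does not vary across posterior samples, e.g.\ $r=r(s,a)$), in which case your argument collapses to the paper's, or keep full generality and settle for the weaker bound $\frac{r_{\max}}{1-\gamma}\cdot\epsilon_1(\pi)$, which your first two steps already establish.
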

We provide the proof of the theorem in \cref{app:bounds_proofs}. Its main idea is to bound the nonlinearity of $c: \xi \mapsto \rho(\pi, \mathbb{E}_{\hat{P} \sim \xi}[\hat{P}])$. In particular, when $\epsilon_1 = 0$ then $c$ is linear and
\[ \srob(\pi) = \min_{\xi\in\Xi} \mathbb{E}[\rho(\pi, \hat{P})]  = \min_{\xi\in\Xi} \rho(\pi, \mathbb{E}[\hat{P}]) = \drob(\pi)\,.  \]

The following lemma bounds the error that arises due to the rectangularization in the \emph{second step} of the approximation.
\begin{lem} \label{thm:error_bound_rectangularization}
Suppose that $\pi_{D}\opt \in \arg\max_{\pi\in\Pi} \drob(\pi)$ and $\pi_R\opt \in \arg\max_{\pi\in\Pi} \rrob(\pi)$. Then:
\[
\drob(\pi\opt_D) - \drob(\pi\opt_R) \;\le\; \frac{1}{1-\gamma} \cdot \epsilon_2~,
\]
where $\epsilon_2 = \max_{s\in\states,a\in\actions} \min_{\xi\in\Xi} \delta(s,a,\xi)$,  $\delta(s,a,\xi) = \sum_{\omega\in\Omega} \xi_\omega\cdot (\hat{P}_{s,a}^{\omega})\tr (r_{s,a} + \gamma \cdot v\opt_D) - (v\opt_D)_{s}$,
 $v\opt_D\in\Real^S$ is the value function of $\pi_D\opt$.
\end{lem}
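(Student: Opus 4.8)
The plan is to split the rectangularization gap into a \emph{policy-selection} part and a \emph{value-function} part, and then control the latter with a single application of the robust Bellman operator. First I would combine the sandwich $\rrob(\pi)\le\drob(\pi)$ from \cref{thm:s_rect_soft_robust} with the optimality of $\pi\opt_R$ for $\rrob$. Writing $v\opt_R$ for the $S$-rectangular robust optimal value function, i.e.\ the unique fixed point of $\Bell_{\mathcal{P}^{\sr}}$, so that $\rrob(\pi\opt_R)=p_0\tr v\opt_R$ and $\drob(\pi\opt_D)=p_0\tr v\opt_D$, the chain
\[
\drob(\pi\opt_D)-\drob(\pi\opt_R)\;\le\;\drob(\pi\opt_D)-\rrob(\pi\opt_R)\;=\;p_0\tr\big(v\opt_D-v\opt_R\big)
\]
reduces the claim to bounding $p_0\tr(v\opt_D-v\opt_R)\le\|v\opt_D-v\opt_R\|_\infty$, where the first inequality uses $\rrob(\pi\opt_R)\le\drob(\pi\opt_R)$.

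For the value-function part I would exploit that $\Bell_{\mathcal{P}^{\sr}}$ is a $\gamma$-contraction in the sup-norm with unique fixed point $v\opt_R$ (Section~\ref{sec:motivation}). The standard fixed-point estimate then gives
\[
\|v\opt_D-v\opt_R\|_\infty\;\le\;\frac{1}{1-\gamma}\,\big\|\Bell_{\mathcal{P}^{\sr}}v\opt_D-v\opt_D\big\|_\infty,
\]
so it remains to show that the one-step robust Bellman residual of $v\opt_D$ is controlled by $\epsilon_2$. The key structural input is that $v\opt_D$ is the ordinary value function of $\pi\opt_D$ under the single mixture model $\sum_{\omega}\xi\opt_\omega\hat P^\omega$ attaining the non-rectangular minimum in~\eqref{eq:dynamic_soft_robust}; hence $\sum_a\pi\opt_D(s,a)\,\delta(s,a,\xi\opt)=0$ at every state $s$, i.e.\ $v\opt_D$ is Bellman-consistent for the \emph{shared} adversary $\xi\opt\in\Xi$. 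This identity is exactly what connects the fixed-$v\opt_D$ residual to the quantity $\delta$ appearing in the statement.

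The main work is in identifying the residual with $\epsilon_2$. Expanding $(\Bell_{\mathcal{P}^{\sr}}v\opt_D)(s)$ via~\eqref{eq:S-rect}, subtracting $(v\opt_D)_s$, and substituting the definition of $\delta$, rewrites the residual at $s$ as $\max_{d\in\Delta^A}\min_{\xi\in\Xi}\sum_a d_a\,\delta(s,a,\xi)$, whereas the dynamic certificate $\xi\opt$ only controls the $\pi\opt_D$-averaged value. I expect this last step to be the crux: the rectangular adversary may pick a different $\xi\in\Xi$ in every state (and, in the SA-rectangular extension, in every state--action pair), while the dynamic worst case uses one shared $\xi\opt$, so the delicate point is the min--max ordering --- justifying that relaxing to an independent per-$(s,a)$ choice of $\xi$ produces exactly the terms $\min_{\xi}\delta(s,a,\xi)$ and, over the worst pair, $\epsilon_2=\max_{s,a}\min_\xi\delta(s,a,\xi)$, together with pinning down the sign of the residual so that $v\opt_D$ plays the role of a super-solution consistent with $v\opt_R\le v\opt_D$. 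Chaining the three displays then yields $\drob(\pi\opt_D)-\drob(\pi\opt_R)\le\epsilon_2/(1-\gamma)$.
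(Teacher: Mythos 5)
Your opening reduction is sound---the chain $\drob(\pi\opt_D)-\drob(\pi\opt_R)\le\drob(\pi\opt_D)-\rrob(\pi\opt_R)$ via $\rrob\le\drob$ and optimality of $\pi\opt_R$ is exactly the role played by \cref{lem:difference_any} in the paper---and the Bellman-consistency identity $\sum_a\pi\opt_D(s,a)\,\delta(s,a,\xi\opt)=0$ is the right structural input. The genuine gap is the step you explicitly defer, and your setup makes it not merely delicate but unworkable as stated. Because you compare $v\opt_D$ against the \emph{optimal} rectangular value function $v\opt_R$, your residual is that of the Bellman \emph{optimality} operator, $(\Bell_{\mathcal{P}^{\sr}}v\opt_D)(s)-(v\opt_D)_s=\max_{d\in\Delta^A}\min_{\xi\in\Xi}\sum_a d_a\,\delta(s,a,\xi)$, which by Sion's minimax theorem equals $\min_{\xi\in\Xi}\max_{a\in\actions}\delta(s,a,\xi)$ and therefore \emph{dominates} $\max_{a\in\actions}\min_{\xi\in\Xi}\delta(s,a,\xi)$ by the max--min inequality. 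Taking the maximum over states, this residual sits \emph{above} $\epsilon_2$, so the two-sided estimate $\norm{\Bell_{\mathcal{P}^{\sr}}v\opt_D-v\opt_D}_\infty\le\epsilon_2$ that your contraction step requires is precisely the wrong direction of the min--max exchange and cannot be established in general.

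The paper avoids this entirely: it never invokes the optimality operator or $v\opt_R$. It introduces the \emph{policy-evaluation} operator for the fixed policy $\pi\opt_D$, namely $(\Topt^{\pi\opt_D}v)_s=\min_{\xi\in\Xi}\sum_{a\in\actions}\sum_{\omega\in\Omega}\xi_\omega\,\pi\opt_D(s,a)\,(\hat P^{\omega}_{s,a})\tr(r_{s,a}+\gamma v)$, bounds the \emph{same-policy} gap $\drob(\pi\opt_D)-\rrob(\pi\opt_D)\le\frac{1}{1-\gamma}\norm{\Topt^{\pi\opt_D}v\opt_D-v\opt_D}_\infty$, and only afterwards passes from $\pi\opt_D$ to $\pi\opt_R$ via \cref{lem:difference_any}. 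For that operator the residual's sign is pinned down for free---it is non-positive, since $\xi\opt$ is feasible for the inner minimization and yields zero by your consistency identity---and its magnitude is controlled by exchanging $\min_\xi$ with the \emph{policy average}, $\min_{\xi}\sum_a\pi\opt_D(s,a)\,\delta(s,a,\xi)\ge\sum_a\pi\opt_D(s,a)\min_{\xi}\delta(s,a,\xi)$, an elementary inequality in the benign direction that needs no minimax theorem and no per-action adversary. If you want to keep your value-function comparison, you must replace the two-sided sup-norm bound with a one-sided monotonicity argument: from $\Bell_{\mathcal{P}^{\sr}}v\opt_D\ge\Topt^{\pi\opt_D}v\opt_D\ge v\opt_D-\epsilon\,\one$ (with $\epsilon$ the bound on the evaluation residual), monotonicity and the constant-shift property of $\Bell_{\mathcal{P}^{\sr}}$ give $v\opt_R\ge v\opt_D-\frac{\epsilon}{1-\gamma}\one$, which suffices because only an upper bound on $p_0\tr(v\opt_D-v\opt_R)$ is needed; note also that the ordering $v\opt_R\le v\opt_D$ you appeal to need not hold pointwise, and fortunately is never needed.
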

The proof of the lemma is reported in \cref{app:bounds_proofs}.

The value $\delta(s,a,\xi)$ in \cref{thm:error_bound_rectangularization} is the difference between the rectangular robust Bellman value in the state-action pair $s,a$ and the non-rectangular RMDP value of $s$. It can be readily seen that $\epsilon_2$ vanishes when there exists an ordering of elements $\omega_1, \omega_2, \ldots$ in $\Omega$ such that $\hat{P}_{s,a}(\omega_i)\tr (r_{s,a} + \gamma \cdot \hat{v}) \ge \hat{P}_{s,a}(\omega_{j})\tr (r_{s,a} + \gamma \cdot \hat{v})$, for $i < j$ and for all states and actions. This is because if the condition holds then the set $\arg \min_{\xi\in\Xi} \delta(s,a,\xi)$ is constant across states and actions and equals to $\arg\min_{\xi\in\Xi} \rho(\pi,\mathbb{E}_{\hat{P}\sim\xi}[\hat{P}])$.

We can now bound the overall RMDP approximation error.
\begin{cor} \label{thm:lower_bound}
The soft-robust return $\srob$ of $\pi\opt_R \in \arg\max_{\pi\in\Pi} \rrob(\pi)$ computed by \cref{alg:norbu_s} satisfies that
\[
\srob(\pi\opt_S) - \srob(\pi\opt_R) \;\le\;  \frac{1}{1-\gamma} \left(2\cdot \gamma \cdot\epsilon_1\cdot r_{\max} + \epsilon_2 \right),
\]
where $\epsilon_1 = \max_{\pi\in\Pi} \epsilon_1(\pi)$, and $\epsilon_1(\pi)$ and $\epsilon_2$ are defined as in \cref{lem:static_dynamic_difference,thm:error_bound_rectangularization} respectively, and $\pi_{S}\opt \in \arg\max_{\pi\in\Pi} \srob(\pi)$  .
\end{cor}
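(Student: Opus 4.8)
The plan is to chain together the two approximation bounds already proved—\cref{lem:static_dynamic_difference} for the first (static-to-dynamic) step and \cref{thm:error_bound_rectangularization} for the rectangularization step—and to glue them with the optimality orderings coming from the fact that $\pi\opt_D$ maximizes $\drob$. Since the corollary is exactly the composition of the two approximations, no new analytic machinery is needed: the entire argument is a telescoping sequence of inequalities, and essentially all the care lies in applying each inequality in the correct direction and tracking the various optimizers.

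First I would pass from the static to the dynamic objective at the static optimizer $\pi\opt_S$. Using the side of \cref{lem:static_dynamic_difference} that reads $\srob(\pi)\le\drob(\pi)+\frac{\gamma\cdot r_{\max}}{1-\gamma}\,\epsilon_1(\pi)$, together with $\epsilon_1(\pi\opt_S)\le\epsilon_1$, gives $\srob(\pi\opt_S)\le\drob(\pi\opt_S)+\frac{\gamma\cdot r_{\max}}{1-\gamma}\,\epsilon_1$. Because $\pi\opt_D$ maximizes $\drob$, we have $\drob(\pi\opt_S)\le\drob(\pi\opt_D)$, so the right-hand side is at most $\drob(\pi\opt_D)+\frac{\gamma\cdot r_{\max}}{1-\gamma}\,\epsilon_1$.

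Next I would invoke \cref{thm:error_bound_rectangularization} to replace $\drob(\pi\opt_D)$ by $\drob(\pi\opt_R)+\frac{1}{1-\gamma}\,\epsilon_2$, and then pass back from the dynamic to the static objective at the rectangular optimizer $\pi\opt_R$, this time using the other side of \cref{lem:static_dynamic_difference}, namely $\drob(\pi)\le\srob(\pi)+\frac{\gamma\cdot r_{\max}}{1-\gamma}\,\epsilon_1(\pi)$, together with $\epsilon_1(\pi\opt_R)\le\epsilon_1$. Collecting the three additive error terms—one copy of $\frac{\gamma\cdot r_{\max}}{1-\gamma}\,\epsilon_1$ from each of the two applications of \cref{lem:static_dynamic_difference}, and one copy of $\frac{1}{1-\gamma}\,\epsilon_2$—and factoring out $\frac{1}{1-\gamma}$ produces exactly $\frac{1}{1-\gamma}\bigl(2\cdot\gamma\cdot\epsilon_1\cdot r_{\max}+\epsilon_2\bigr)$, as claimed.

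The main obstacle, such as it is, is purely bookkeeping rather than analytic: one must use the absolute-value bound of \cref{lem:static_dynamic_difference} in the correct direction at each of the two endpoints $\pi\opt_S$ and $\pi\opt_R$, and exploit the optimality of $\pi\opt_D$ for $\drob$ (rather than inadvertently comparing returns at the wrong policy). The one genuinely new ingredient over the two lemmas is the passage from the per-policy quantity $\epsilon_1(\pi)$ to the uniform constant $\epsilon_1=\max_{\pi\in\Pi}\epsilon_1(\pi)$, which is what lets both $\epsilon_1(\pi\opt_S)$ and $\epsilon_1(\pi\opt_R)$ be dominated by the same $\epsilon_1$; this is immediate from the definition but is the only place where the uniformization happens.
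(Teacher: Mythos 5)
Your proposal is correct and follows essentially the same route as the paper's proof: the paper telescopes $\srob(\pi\opt_S)-\srob(\pi\opt_R)$ through $\drob(\pi\opt_D)$ and $\drob(\pi\opt_R)$, bounding the two static--dynamic gaps via \cref{lem:static_dynamic_difference} (with the optimality of $\pi\opt_D$ playing exactly the role you give it, via \cref{lem:difference_any}) and the middle term via \cref{thm:error_bound_rectangularization}. Your chained-inequality presentation and the paper's telescoping-sum presentation are the same argument, collecting the same three error terms.
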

The proof of the corollary is reported in \cref{app:bounds_proofs}.


\section{Experimental Evaluation}
\label{sec:experiments}

In this section, we present two case studies to demonstrate the performance of the soft-robust criterion.
We compare soft-robust algorithms with related baseline algorithms in terms of the mean and robust performance of over the posterior distribution $\Pf$ inferred from the fixed dataset $\mathcal{D}$. Please refer to \cref{app:experimental_results} for a more detailed detailed description.


\subsection{Integrated Pest Control Problem}

The domain represents a simplified integrated pest control problem. The decision-maker must decide which, if any, pesticide to use during the growing season. The state represents the pest population, and action determines whether a pesticide is used. Exponential pest growth dynamics drive the transition model and the rewards measure the net profit of the yields less the pesticide costs. The corresponding MDP consists of $51$ states, each represents the current pest population as determined by trapping ($0$ means no pest population). Each one of $5$ actions available prescribes the use of an increasingly potent pesticide. The true transition probabilities use a logistic population growth model as described in~\citet{Tirizoni2018PolicyC} and the discount rate is $\gamma = 0.9$.

\begin{figure}
    \centering
    \includegraphics[width=0.8\linewidth]{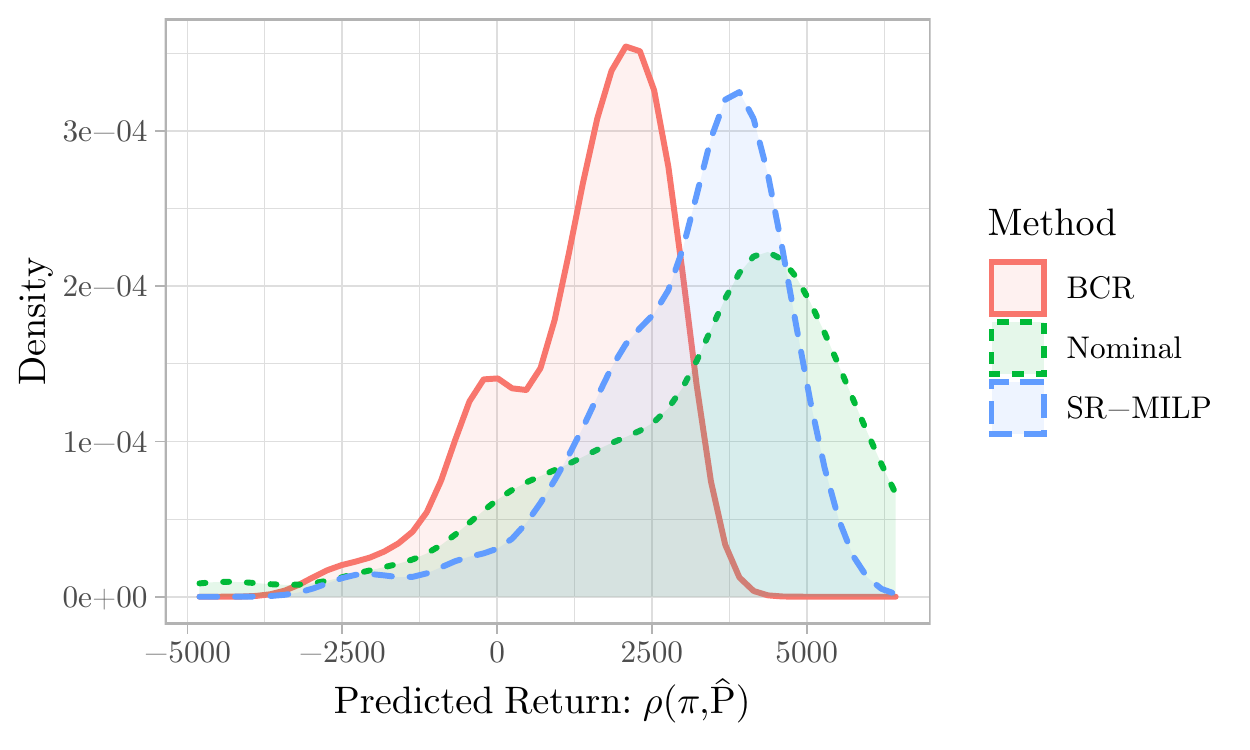}
    \vspace{-0.3cm}
    \caption{Comparison of the densities of $\rho(\pi, \hat{P})$ for three policies.} \label{fig:density_comparison}
\end{figure}

To compute the posterior distribution over $\hat{P}$, we gather $300$ state-action transition samples from a single episode. Using these transition samples, we fit an exponential population model~\cite{Kery2012Bayesian} and sample $100$ posterior samples using MCMC. We use these samples to formulate and solve the MILP in \cref{fig:MILP} and to run \cref{alg:norbu_s}. We use confidence $\alpha = 0.7$ for both the percentile criterion and soft-robust objective for the evaluation. We also use $\lambda = 0.5$ for the soft-robust objective.

\cref{fig:density_comparison} compares the return distribution of the soft-robust MILP policy with the robust BCR solution~\cite{russel2019BeyondCR}, and the nominal policy, which solves the expected transition model $\mathbb{E}[\hat{P}]$. Although the nominal policy achieves the highest mean return, it has a significant probability of incurring loss over $\$5,000$. The BCR policy that targets the percentile criterion improves robustness, but still ignores the fat tail and degrades the mean return.

\Cref{fig:tradeoff_population} shows the trade-off between mean and worst-case performance for several robust methods for different choices of $\lambda \in [0,1]$ as indicated by the floating labels. We compare the optimal MILP policy in \cref{fig:MILP} (SR-MILP) and \cref{alg:norbu_s} (SRVI) with BCR and RSVF~\cite{russel2019BeyondCR}. Note that RSVF and BCR optimize the percentile criterion, which has no inherent notion of the trade-off between robust and mean performance. We simulate the effect of $\lambda$ by simply shrinking the ambiguity sets in the RMDP formulations (multiplying the budget by $\lambda$). Our soft-robust algorithms outperform earlier methods and trade-off well between the mean and robust return with changing $\lambda$.

\begin{figure}
	\centering
	\includegraphics[width=0.85\linewidth]{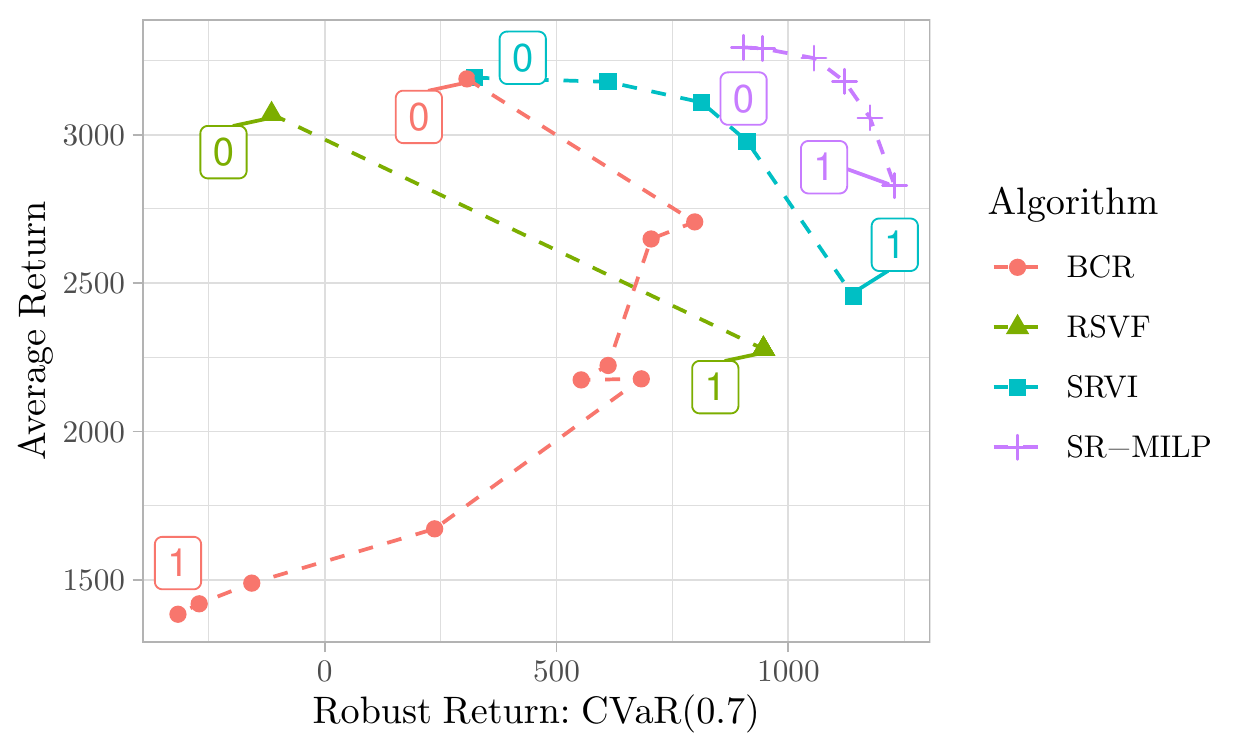}
	\vspace{-0.3cm}
	\caption{Comparison between the trade-offs of several algorithms as parameterized by $\lambda$ as indicated by the overlay label.}
	\label{fig:tradeoff_population}
\end{figure}


\subsection{Cancer Growth Simulator}

The cancer simulator models the growth of tumors in cancer patients. The state is a 4-dimensional vector that captures the dynamics of the tumor's growth. The monthly binary action determines whether to administer chemotherapy to the patient~\cite{gottesman2020interpretable,Ribba2012ATG}. The discount factor $\gamma$ is set to 0.9.

This experiment compares dynamic soft-robust criterion with the dynamic soft-robust objective~\cite{Derman2018Soft-RobustAC} and the robust objective in~\citet{Mankowitz2020RobustRL}.
We combine these robust objectives with the Soft Actor-Critic~(SAC) algorithm~\cite{haarnoja18bSA} to obtain two algorithms which we call Soft-Robust SAC~(SR-SAC) and Robust SAC~(R-SAC). We use SAC instead of robust Q-learning~\cite{hester2017DQ,lillicrap2019DDPG} or other actor-critic~\cite{Konda2000Actor-Critic} algorithms because it has been observed to be more stable. For each algorithm, we train 5 separate agents using 100 samples of $\hat{P}\sim\Pf$ and evaluate the mean and robust return of the computed policies using a separate set of 50 samples of $\hat{P}\sim\Pf$. The robust return is computed using $\cvar$ with $\alpha = 0.9$.

\cref{fig:cvar_cancer} compares the mean and robust performance of SR-SAC and R-SAC with SRVI for $\lambda \in \{0.25,0.75,1.0\}$. SRVI outperforms SR-SAC and R-SAC in mean and robust performance for appropriately chosen $\lambda$. This behavior is expected since SR-SAC ignores the robust return and R-AC ignores the mean return. Focusing on the returns on the training set, SRVI's robust performance improves with an increasing $\lambda$ and the mean performance improved with a decreasing $\lambda$. This expected trend, however, does not quite hold for the test set because of the generalization error. We leave studying the generalization issue for future work.

\begin{figure}
	\centering
	\includegraphics[width=0.8\linewidth]{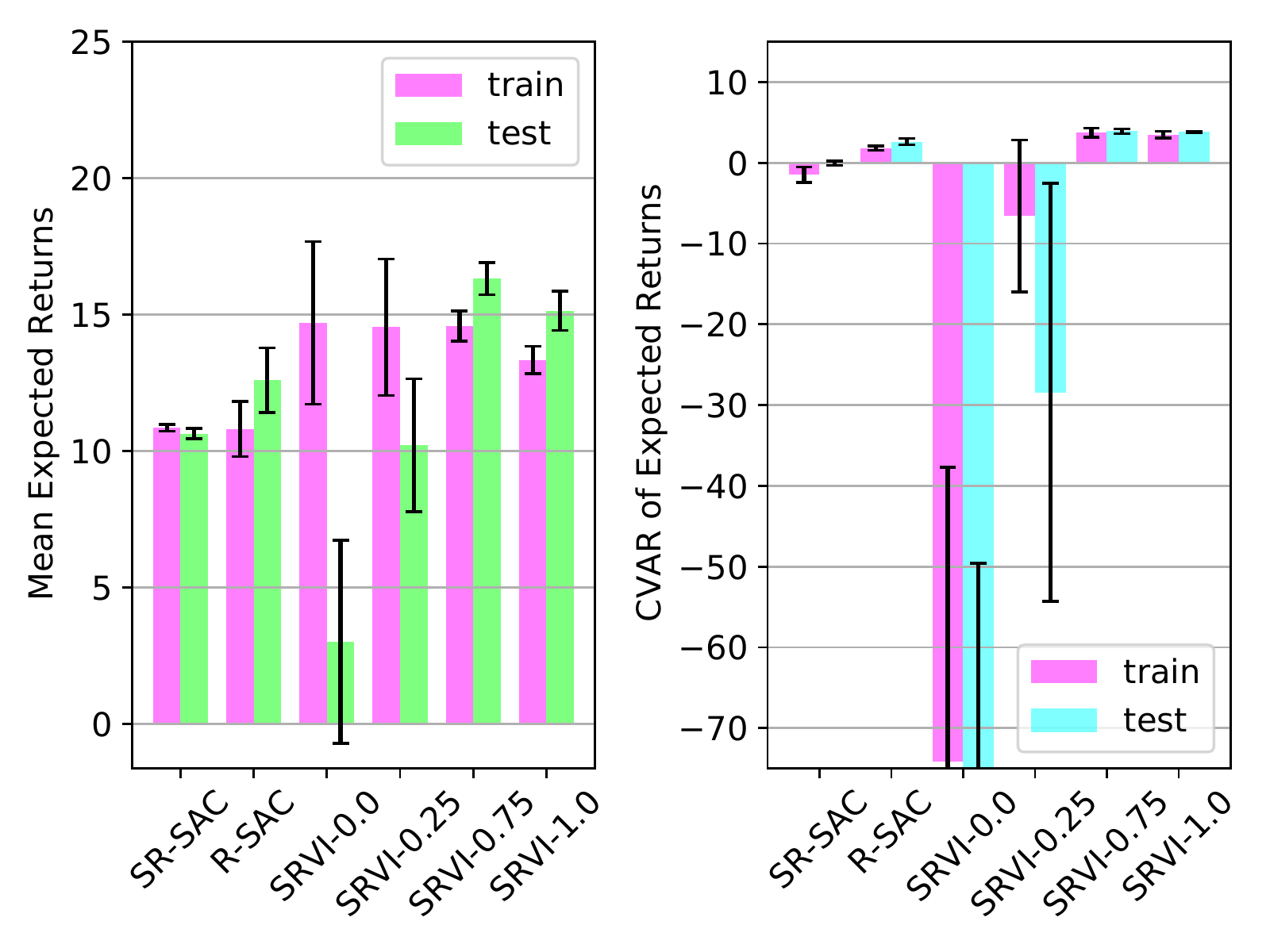}
	\vspace{-0.5cm}
	\caption{Mean and Robust performance of SRVI, SR-SAC, and R-SAC in Cancer Environment.}
\label{fig:cvar_cancer}
\end{figure}


\section{Conclusion}

We proposed a new static soft-robust framework that can balance expected and robust performance in reinforcement learning and handle heavy tail risks. We that the soft-robust objective can be formulated and solved as a MILP for deterministic policies. To scale to larger problems, we propose a new specific RMDP formulation which we combine with linear value function approximation. Finally, we analyze the approximation error of the RMDP formulation and evaluate the algorithms on two domains.

\bibliography{bibliography}
\bibliographystyle{icml2021}

\appendix
\onecolumn

\section{SA-rectangular Soft-Robust MDPs} \label{app:sa_rectangular}

Previously, we described the S-rectangular soft-robust MDPs as a tractable approach for solving the dynamic soft-robust objective. In this section, we extend our results to soft-robust MDPs with SA-rectangular ambiguity set. The SA-rectangular ambiguity set is the simplest class of ambiguity sets and is defined as~\citep{Nilim2005RobustControl,Wiesemann2013RMDP}
\begin{equation*}
    \PP^{SA} = \left\{ p \in (\Delta^S)^{S \times A} \ss p_{s,a} \in \mathcal{P}^{SA}_{s,a}, \; \forall s \in \states, \; \forall a \in \actions \right\},
\end{equation*}
for some $\mathcal{P}^{SA}_{s,a} \subseteq \Delta^S, s\in\states, a\in\actions$. The SA-rectangular ambiguity sets assume that the transition models corresponding to each state-action pair are independent. The robust Bellman optimality operator $\mathcal{T}_{\mathcal{P}^{SA}} : \real^S \to \real^S$ for SA-rectangular ambiguity set $\mathcal{P}^{SA}$ is defined as~\cite{iyengar2005RMDP,Wiesemann2013RMDP}
\begin{equation} \label{eq:SA-rect}
    (\Bell_{\mathcal{P}^{SA}} \, v)(s) = \max_{a\in\actions} \min_{P \in \mathcal{P}^{SA}_{s,a} } \; P_{s,a}\tr \left(r_{s,a}  + \gamma \cdot v \right).
\end{equation}
In SA-rectangular MDPs, the optimal policies are deterministic~\cite{Wiesemann2013RMDP} and the optimal value function $v\opt\in\real^S$ is the unique fixed-point of the robust Bellman optimality operator $\Bell_{\mathcal{P}^{SA}} : \real^S \to \real^S$

The SA-rectangular MDP corresponding to the dynamic soft-robust objective can be derived following a procedure similar to the 2-steps procedure in~\cref{sec:rmdp_formulation}. The only difference in the procedure is that, instead of assuming that the ambiguity set $\mathcal{P}^{R}$ is S-rectangular in Step 2, we would assume that the soft-robust ambiguity set $\mathcal{P}^{R}$ is SA-rectangular, i.e.,
\begin{small}
    \begin{equation}
        \label{eq:ambiguity_rect_sa}
        \mathcal{P}^R = \bigotimes_{s \in \states} \mathcal{P}_{s,a}^R, \;\;\; \text{where } \; \mathcal{P}_{s,a}^R = \Big\{ \sum_{\omega \in \Omega} \xi_{\omega} \cdot \hat{P}_{s,a}^{\omega} \ss \xi \in \Xi \Big\}.
    \end{equation}
\end{small}
To differentiate from the S-rectangular soft-robust ambiguity sets denoted by $\mathcal{P}^{R}$, we denote the soft-robust ambiguity sets with the SA-rectangularity assumption by $\mathcal{P}^{RA}$.
The resulting SA-rectangular soft-robust MDP objective which we denote by $\sarob : \Pi \to \Real$ can be expressed as:
\begin{equation} \label{eq:objective_sarectangular}
    \sarob(\pi) \;=\; \min_{P \in \mathcal{P}^{RA}} \; \rho \left(\pi, P \right)\,.
\end{equation}
The optimal value function $v^{RA}\in\real^S$ of the SA-rectangular soft-robust MDP satisfies the robust Bellman optimality equation $v^{RA} = \Bell_{\mathcal{P}^{RA}} v^{RA}$ where $\Bell_{\mathcal{P}^{RA}}$ is the robust Bellman optimality operator in~\cref{eq:SA-rect} for ambiguity set $\mathcal{P}^{RA}$. We term the operator $\Bell_{\mathcal{P}^{RA}}$ as the SA-rectangular soft-robust Bellman optimality operator. Note that like all robust Bellman optimality operators defined for SA-rectangular ambiguity sets, the SA-rectangular soft-robust Bellman optimality operator $\Bell_{\mathcal{P}^{RA}}$ as well, always results in deterministic optimal policies \cite{Wiesemann2013RMDP}. Hence, contrary to the S-rectangular soft-robust Bellman optimality operator $\Bell_{\mathcal{P}^{R}}$, we need not solve the linear program in~\cref{eq:linear_program} for computing it. Instead it is sufficient to evaluate the convex combination of $\cvar$ and mean of expected returns for each action independently and then choose the soft-robust returns corresponding to the best action as shown in~\eqref{eq:inner_optimization_sa}. The $\cvar$ measure can be computed in quasi-linear time~\cite{Acerbi2002expected} and, therefore, the overall complexity of computing $(\Bell_{\mathcal{P}^{RA}} v)(s)$ is $\mathcal{O}(SAN \log N)$. The linear program for the S-rectangular soft-robust Bellman optimality operator on the other hand, takes polynomial time to solve.

To solve the SA-rectangular soft-robust MDP, we can use the Soft-Robust Value Iteration algorithm discussed in \cref{sec:value_iteration} with one key difference. Instead of using the S-rectangular soft-robust Bellman optimality operator $\Bell_{\mathcal{P}^{R}}$ in ~\eqref{eq:inner_optimization} to derive the policy and construct $\Sigma_{\Phi^{\top}\w}$, we would use the SA-rectangular soft-robust Bellman optimality operator $\Bell_{\mathcal{P}^{RA}} v^{RA}$, i.e.,
\begin{equation} \label{eq:inner_optimization_sa}
    \begin{gathered}
        \sigma_{\Phi^\top\w}(s) = (\Bell_{\mathcal{P}^{RA}} v )(s) = (\Bell_{\mathcal{P}^{RA}} (\Phi^{\top} \w) )(s) = \max_{a\in\actions} \min_{P_{s,a} \in \mathcal{P}^{RA}_{s,a} } \;  P_{s,a}\tr \left(r_{s,a}  + \gamma \cdot v \right)
    \end{gathered}
\end{equation}
In this setting, $\Sigma_{\Phi^{\top}\w}$ is constructed as a vector of the SA-rectangular soft-robust Bellman optimality values for matrix $\Phi$: $\{\sigma_{\Phi^{\top}\w}(s_t)\}_{t=1}^M$.

\section{Auxiliary Results}

The following lemma will be useful when bounding the RMDP approximation of the soft-robust objective.
\begin{lem} \label{lem:matrix_norm}
    The vector-induced norms for a stochastic matrix $P\in\real^{S\times S}$ satisfy that \[\norm{P}_\infty = \norm{P\tr}_1 = 1~. \]
\end{lem}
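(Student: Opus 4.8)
The plan is to reduce everything to the standard characterizations of the vector-induced matrix norms and then invoke the two defining properties of a stochastic matrix, namely non-negativity of the entries and unit row sums. The argument is entirely elementary; the only ``content'' is unfolding the definitions correctly.

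First I would recall the closed forms for the two operator norms in question. For any matrix $A\in\real^{S\times S}$, the norm induced by the vector $\ell_\infty$-norm equals the maximum absolute row sum, $\norm{A}_\infty = \max_i \sum_j |A_{ij}|$, and the norm induced by the vector $\ell_1$-norm equals the maximum absolute column sum, $\norm{A}_1 = \max_j \sum_i |A_{ij}|$. These are classical facts, so I would either cite a standard linear-algebra reference or include the one-line verification that the suprema over the unit balls are attained at a coordinate vector (for the $\ell_1$-induced norm) and by matching signs on a single row (for the $\ell_\infty$-induced norm).

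Next I would observe that transposition interchanges rows and columns, so the column sums of $A\tr$ are exactly the row sums of $A$. Consequently $\norm{A}_\infty = \norm{A\tr}_1$ holds for \emph{every} matrix, with no structural assumption on $A$. This already establishes the equality $\norm{P}_\infty = \norm{P\tr}_1$ in the lemma, leaving only the numerical value to pin down.

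Finally I would use stochasticity of $P$ to evaluate $\norm{P}_\infty$. Since every entry satisfies $P_{ij}\ge 0$ and each row sums to one, we have $\sum_j |P_{ij}| = \sum_j P_{ij} = 1$ for every $i$, and taking the maximum over $i$ yields $\norm{P}_\infty = 1$. Combined with the transposition identity from the previous step, this gives $\norm{P}_\infty = \norm{P\tr}_1 = 1$, as claimed. I do not anticipate a genuine obstacle here: the proof is a routine application of definitions, and the only point requiring minor care is making sure the $\ell_\infty$-induced norm is matched with the row-sum form and the $\ell_1$-induced norm with the column-sum form, rather than transposing the two.
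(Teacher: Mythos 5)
Your proof is correct, but it takes a different route from the paper's. You work from the explicit closed-form characterizations of the induced norms ($\norm{A}_\infty$ as the maximum absolute row sum, $\norm{A}_1$ as the maximum absolute column sum) and observe that transposition swaps rows and columns, which gives $\norm{A}_\infty = \norm{A\tr}_1$ for arbitrary $A$; stochasticity then pins the common value to $1$. The paper instead stays with the variational definition of the induced norm and uses the duality of the vector $\ell_1$ and $\ell_\infty$ norms: writing $\norm{P\tr x}_1 = \max_{y} y\tr P\tr x$ over the $\ell_\infty$ ball and exchanging the two maximizations yields $\norm{P\tr}_1 = \norm{P}_\infty$ without ever invoking the row-sum/column-sum formulas, after which it cites $\norm{P}_\infty = 1$ for stochastic matrices as well known. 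The trade-off: your argument is more concrete and actually fills in the final step explicitly (nonnegativity plus unit row sums give the value $1$), whereas the paper's duality argument is formula-free and generalizes immediately to any pair of dual norms, i.e., $\norm{A\tr}_q = \norm{A}_p$ whenever $\nicefrac{1}{p} + \nicefrac{1}{q} = 1$. Both proofs are complete and elementary; neither has a gap.
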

\begin{proof}
    Let $\mathcal{L}_1 = \{x \in \real^S \ss \norm{x}_1 = 1 \}$ be the $L_1$ ball and let $\mathcal{L}_\infty = \{x \in \real^S \ss \norm{x}_\infty = 1 \}$ be the $L_\infty$ ball.  Then, using basic linear algebra, definition of induced matrix norms in steps $(a)$, and the duality of the vector $L_1$ and $L_\infty$ norm in step $(b)$, we can establish the desired result as follows:
    \[
    \norm{P\tr}_1 \stackrel{(a)}{=} \max_{x\in\mathcal{L}_1}  \,\norm{P\tr x}_1
    = \max_{x \in\mathcal{L}_1} \max_{y\in\mathcal{L}_\infty}  y\tr P\tr x
    \stackrel{(a)}{=} \max_{y\in\mathcal{L}_\infty}  \norm{P y}_\infty = \norm{P}_\infty.
    \]
    The result follows because, as it is well-known, $\norm{P}_\infty = 1$ for any stochastic matrix $P$.
\end{proof}

The following generic lemma establishes the bounds on the error between a maximizer of a function and a maximizer of an approximation of that function.
\begin{lem} \label{lem:difference_any}
    Let $x\opt \in \arg\max_{x\in\mathcal{X}} f(x)$ and $\tilde{x}\opt \in \arg\max_{x\in\mathcal{X}} \tilde{f}(x)$ be the maximizers of some function $f : \mathcal{X} \to \real$ and its approximation $\tilde{f} : \mathcal{X} \to \real$ respectively. Then the optimality gap of $\tilde{x}\opt$ in non-negative and bounded by:

    \begin{equation} \label{eq:difference_any}
        f(x\opt) - f(\tilde{x}\opt) \;\le\; \lvert f(x\opt) - \tilde{f}(x\opt) \rvert + \lvert f(\tilde{x}\opt) - \tilde{f}(\tilde{x}\opt) \rvert \le 2 \cdot \max_{x\in\mathcal{X}} \, \lvert f(x) - \tilde{f}(x) \rvert \,.
    \end{equation}
    Moreover, when $\tilde{f}(x) \le f(x)$ for all $x\in\mathcal{X}$, then the optimality gap of $\tilde{x}\opt$ reduces to:
    \begin{equation} \label{eq:difference_bounded}
        f(x\opt) - f(\tilde{x}\opt) \;\le\; f(x\opt) - \tilde{f}(x\opt)\,.
    \end{equation}
\end{lem}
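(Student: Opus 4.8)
The plan is to prove both inequalities by the standard add-and-subtract (telescoping) decomposition, inserting the approximate objective $\tilde{f}$ evaluated at each of the two maximizers and then exploiting the optimality of $\tilde{x}\opt$ for $\tilde{f}$. First I would dispose of the non-negativity claim: since $x\opt$ maximizes $f$ over $\mathcal{X}$ and $\tilde{x}\opt \in \mathcal{X}$, we immediately have $f(x\opt) \ge f(\tilde{x}\opt)$, so the optimality gap $f(x\opt) - f(\tilde{x}\opt)$ is non-negative without any further work.

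For the upper bound in \eqref{eq:difference_any}, I would write the identity
\[
f(x\opt) - f(\tilde{x}\opt) = \bigl[f(x\opt) - \tilde{f}(x\opt)\bigr] + \bigl[\tilde{f}(x\opt) - \tilde{f}(\tilde{x}\opt)\bigr] + \bigl[\tilde{f}(\tilde{x}\opt) - f(\tilde{x}\opt)\bigr]~.
\]
The key observation is that the middle bracket is non-positive, because $\tilde{x}\opt$ maximizes $\tilde{f}$ and hence $\tilde{f}(x\opt) \le \tilde{f}(\tilde{x}\opt)$. Dropping that non-positive term and bounding each of the two remaining brackets by its absolute value gives the first inequality; replacing each absolute value by the uniform bound $\max_{x\in\mathcal{X}} \lvert f(x) - \tilde{f}(x)\rvert$ yields the factor-of-two estimate.

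For the refined bound \eqref{eq:difference_bounded} under the one-sided hypothesis $\tilde{f}(x) \le f(x)$ for all $x\in\mathcal{X}$, I would reuse the very same three-term decomposition. Now the third bracket $\tilde{f}(\tilde{x}\opt) - f(\tilde{x}\opt)$ is \emph{also} non-positive by the hypothesis, so both the second and the third brackets can be discarded, leaving exactly $f(x\opt) - \tilde{f}(x\opt)$ as the bound. The hard part is essentially nonexistent here: this is an elementary consequence of the optimality of $\tilde{x}\opt$ together with the triangle inequality, and the only real decision is choosing which pair of intermediate terms to insert so that the sign of the middle bracket can be controlled by the maximization property.
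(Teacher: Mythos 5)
Your proposal is correct and follows essentially the same argument as the paper: the paper also inserts $\tilde{f}$ via an add-and-subtract step, uses the optimality of $\tilde{x}\opt$ (i.e., $\tilde{f}(x\opt) \le \tilde{f}(\tilde{x}\opt)$) to discard the middle term, and then derives \eqref{eq:difference_bounded} by additionally dropping the non-positive term $\tilde{f}(\tilde{x}\opt) - f(\tilde{x}\opt)$ under the one-sided hypothesis. Your three-term telescoping is just a more explicit writing of the paper's two-step decomposition, and your explicit note on non-negativity is a harmless addition the paper leaves implicit.
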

\begin{proof}
    First, the following basic inequality follows by algebraic manipulation as:
    \begin{subequations}
        \begin{align}
            \nonumber
            f(x\opt) - f(\tilde{x}\opt) &= f(x\opt) - f(\tilde{x}\opt) + \tilde{f}(\tilde{x}\opt) - \tilde{f}(\tilde{x}\opt) &&\text{Add $0$} \\
            \nonumber
            &= f(x\opt) - \tilde{f}(\tilde{x}\opt) + \Bigl(\tilde{f}(\tilde{x}\opt) - f(\tilde{x}\opt) \Bigr) && \text{Rearrange} \\
            \label{eq:proof_diff_any_bound}
            &\le f(x\opt) - \tilde{f}(x\opt) + \Bigl(\tilde{f}(\tilde{x}\opt) - f(\tilde{x}\opt) \Bigr) && \text{Optimality of $\tilde{x}\opt$} \\
            \label{eq:proof_diff_any_bound_abs}
            &\le \left\lvert f(x\opt) - \tilde{f}(x\opt) \right\rvert + \left\lvert \tilde{f}(\tilde{x}\opt) - f(\tilde{x}\opt) \right\rvert~.
        \end{align}
    \end{subequations}
    Then, the inequality~\eqref{eq:difference_any} follows from~\eqref{eq:proof_diff_any_bound_abs} because $x\opt \in \mathcal{X}$ and $\tilde{x}\opt \in \mathcal{X}$ and therefore
    \begin{align*}
        \left\lvert f(x\opt) - \tilde{f}(x\opt) \right\rvert &\le \max_{x\in\mathcal{X}} \, \lvert f(x) - \tilde{f}(x) \rvert \\
        \left\lvert \tilde{f}(\tilde{x}\opt) - f(\tilde{x}\opt) \right\rvert &\le \max_{x\in\mathcal{X}} \, \lvert f(x) - \tilde{f}(x) \rvert~.
    \end{align*}
    The inequality~\eqref{eq:difference_bounded} follows from~\eqref{eq:proof_diff_any_bound} because $\tilde{f}(x) \le f(x)$ for all $x\in\mathcal{X}$ and therefore
    \begin{align*}
        f(x\opt) - f(\tilde{x}\opt) &\le f(x\opt) - \tilde{f}(x\opt) + \Bigl(\tilde{f}(\tilde{x}\opt) - f(\tilde{x}\opt) \Bigr) \\
        &\le f(x\opt) - \tilde{f}(x\opt) && \text{Because $\tilde{f}(\tilde{x}\opt) - f(\tilde{x}\opt) \le 0$} ~.
    \end{align*}

\end{proof}

\section{Proofs: \cref{sec:soft-robust}} \label{apps:soft_robust}

\begin{proof}[Proof of \cref{thm:no_bayesian_surprise}]
    We omit the dependence of $\bar{\pi}_S$ and $\bar{\rho}_S$ on $\mathcal{D}$ to reduce clutter. No post-decision regret for $\lambda = 0$ follows by conditioning on the dataset $\mathcal{D}$ as:
    \begin{align*}
        \Ex{\mathcal{D},P\opt}{\rho(\bar{\pi}_S, P\opt) - \bar{\rho}_S }	 &= \Ex{\mathcal{D}}{\Ex{P\opt}{\rho(\bar{\pi}_S, P\opt) - \bar{\rho}_S \ss \mathcal{D} }} & \text{Property of $\mathbb{E}$}\\
        &= \Ex{\mathcal{D}}{\Ex{P\opt}{\E{\rho(\bar{\pi}_S, P\opt) \ss \mathcal{D} } - \bar{\rho}_S \ss D}} & \text{Property of $\mathbb{E}$} \\
        &= \Ex{\mathcal{D}}{\Ex{P\opt}{\rho(\bar{\pi}_S, P\opt) \ss \mathcal{D} } - \bar{\rho}_S} & \text{$\bar\rho_S$ constant for $P\opt$} \\
        &= \Ex{\mathcal{D}}{\Ex{P\opt}{\rho\left(\bar\pi_S, P\opt \right) \ss \mathcal{D}} - \Ex{P\opt}{\rho(\bar{\pi}_S, P\opt) \ss \mathcal{D} }} = 0 & \text{from \eqref{eq:static_soft_robust} and $\lambda = 0$}\\
    \end{align*}
    The result for $\lambda > 0$ follows using the same steps and the fact that $\cvar[X]\le \E{X}$ for any random variable and therefore:
    \[ \Ex{P\opt}{\rho(\bar{\pi}_S, P\opt) \ss \mathcal{D} } \;\ge\; \bar{\rho}_S   ~.\]
\end{proof}

\section{Proofs: \cref{sec:Soft-Robust-optimization}} \label{app:soft_robust_algorithms}

\begin{proof}[Proof of \cref{prop:static_soft_robust}]
    First, we show that the negations of the terms in the soft-robust objective~\eqref{eq:static_soft_robust} are support functions~\cite{Rockafellar2000optimizationCVAR} of convex sets. For any random variable $X : \Omega \to \real$ with probability measure function $\Pf$, the robust representation of $\cvar$ takes the following form~(e.g.,~\cite{Schied2006Riskmeasures,Rockafellar2000optimizationCVAR}):
    \begin{equation} \label{eq:cr}
        \cvar^{\alpha}[X] = \min_{\xi\in\Delta^\Omega} \left\{ \sum_{\omega\in\Omega} \xi_\omega \cdot X(\omega) \ss \xi_\omega \leq \frac{1}{1-\alpha} \Pf_\omega, \, \forall \omega\in\Omega \right\} ~,
    \end{equation}
    and, therefore, the $\cvar$ term in~\eqref{eq:static_soft_robust} becomes
    \begin{equation} \label{eq:robustform1}
        \cvar^\alpha_{\hat{P} \sim \Pf} \left[\rho(\pi,\hat{P}) \right] = \min_{\xi \in \mathcal{Q}^{\cvar}} \; \sum_{\omega=1}^D \xi_\omega \cdot \rho(\pi,\hat{P}^{\omega})
    \end{equation}
    where the set $\mathcal{Q}^{\cvar}$ is defined as
    \begin{equation*}
        \mathcal{Q}^{\cvar} = \left\{ \xi \in \Delta^D \ss \xi_\omega \leq \frac{1}{1-\alpha} \Pf_\omega, \; \omega \in \Omega \right\} ~.
    \end{equation*}
    As a result of~\eqref{eq:robustform1}, the function $X \mapsto -\cvar_{\hat{P} \sim \Pf}^\alpha [ - X]$ for $X:\Omega\to\Real$ is the support function of set $\mathcal{Q}^{\cvar}$. Note that we are interpreting the random variable $X$ as a vector over $\Real^\Omega$. Similarly, the mean term in \eqref{eq:static_soft_robust} trivially equals to
    \begin{equation} \label{eq:expectation_form}
        \Exp{\hat{P} \sim \Pf}{\rho(\pi,\hat{P})} =
        \min_{\xi \in \mathcal{Q}^{\mathbb{E}}} \; \sum_{\omega\in\Omega} \xi_\omega \cdot \rho(\pi,\hat{P}^{\omega}),
    \end{equation}
    where $\mathcal{Q}^{\mathbb{E}} = \left\{ \Pf \right\}$ is a singleton set. As with the $\cvar$ above, it can be seen readily that the function $X \mapsto -\Exp{\hat{P} \sim \Pf}{ - X }$ for $X:\Omega\to\Real$ is the support function of $\mathcal{Q}^{\mathbb{E}}$.

    Next, any two support functions $f_1(z) = \max_{q\in \mathcal{Q}_1} z\tr q$ and $f_2(z) = \max_{q\in \mathcal{Q}_2} z\tr q$  over convex sets $\mathcal{Q}_1,\mathcal{Q}_2$ satisfy for $\lambda \in [0,1]$~(see for example Chapter 13 of~\cite{rockafellar1970convex}),
    \begin{equation}\label{eq:support_sum}
        \lambda \cdot f_1(z) + (1-\lambda) \cdot f_2(z) = \max_q \; \left\{ q\tr z \ss q\in (\lambda \cdot \mathcal{Q}_1 + (1-\lambda) \cdot \mathcal{Q}_2) \right\}~.
    \end{equation}
    Multiplying the equality in~\eqref{eq:support_sum} by $-1$, and using $-z$ as the parameter, we get:
    \begin{equation} \label{eq:eq_support_sum_minus}
        \begin{aligned}
            - \lambda \cdot f_1(-z) - (1-\lambda) \cdot f_2(-z) &= - \max_q \; \left\{ - q\tr z \ss q\in (\lambda \cdot \mathcal{Q}_1 + (1-\lambda) \cdot \mathcal{Q}_2) \right\} \\
            &= \min_q \; \left\{ q\tr z \ss q\in (\lambda \cdot \mathcal{Q}_1 + (1-\lambda) \cdot \mathcal{Q}_2) \right\}
            ~.
        \end{aligned}
    \end{equation}
    Consider the sets $\mathcal{Q}_1 = \mathcal{Q}^{\cvar}$, $\mathcal{Q}_2 = \mathcal{Q}^{\mathbb{E}}$ and support functions $f_1(X) = -\cvar^\alpha[-X]$ and $f_2(X) = - \E{-X}$ in~\eqref{eq:eq_support_sum_minus}. Then, we can reformulate \eqref{eq:static_soft_robust} as:
    \begin{align*}
        \srob(\pi) &= (1-\lambda) \cdot \E{\rho(\pi,\hat{P})} + \lambda \cdot \cvar^{\alpha}\left[\rho(\pi,\hat{P})\right] \\
        &= \min_{\xi \in \Delta^N} \; \left\{  \sum_{\omega \in\Omega} \xi_\omega \cdot \rho(\pi, \hat{P}^\omega) \ss \xi = \lambda \cdot \xi_1 + (1-\lambda)\cdot \xi_2, \, \xi_1\in \mathcal{Q}^{\cvar},\, \xi_2 \in\mathcal{Q}^{\mathbb{E}} \right\}.
    \end{align*}
    The the feasible set in the equation above in terms in $\xi,\xi_1 \in \Real^N$  (note that $\xi_2 = f$) is represented by these inequalities,
    \begin{align*}
        \xi &= \lambda \cdot \xi_1 + (1-\lambda)\cdot f &
        \xi &\ge \zero &
        \one\tr \xi &= 1 \\
        \xi_1 &\le \frac{1}{1-\alpha} \cdot f &
        \xi_1 &\ge \zero &
        \one\tr \xi_1 &= 1
    \end{align*}
    Now, substituting $\xi_1 = \nicefrac{1}{\lambda} \cdot (\xi - (1-\lambda)\cdot f)$ to the inequalities above, we get
    \begin{align*}
        0&= 0 &
        \xi &\ge \zero &
        \one\tr \xi &= 1 \\
        \nicefrac{1}{\lambda} \cdot (\xi - (1-\lambda)\cdot f) &\le \frac{1}{1-\alpha} \cdot f &
        \nicefrac{1}{\lambda} \cdot (\xi - (1-\lambda)\cdot f) &\ge \zero &
        \one\tr (\nicefrac{1}{\lambda} \cdot (\xi - (1-\lambda)\cdot f)) &= 1 ~,
    \end{align*}
    which, using $\lambda\in[0,1]$ and $f\in\Delta^N$, reduces to:
    \begin{align*}
        0&= 0 &
        \xi &\ge \zero &
        \one\tr \xi &= 1 \\
        \xi &\le \frac{\lambda}{1-\alpha} \cdot f  + (1-\lambda)\cdot f &
        \xi &\ge (1-\lambda)\cdot \Pf &
        0&= 0 ~,
    \end{align*}
    The result then follows by simple algebraic manipulation.
\end{proof}

\begin{proof}[Proof of \cref{prop:milp_static}]
    The proof proceeds by first formulating the soft-robust optimization as a nonconvex quadratic optimization problem and then using the McCormick inequality to turn it to a MILP. Recall that the soft-robust objective in~\eqref{eq:static_soft_robust} is defined as:
    \begin{equation} \label{eq:srobust_1}
        \srob(\pi) \;=\;  (1-\lambda) \cdot \Exp{}{\rho(\pi,\hat{P})} + \lambda \cdot \cvar^{\alpha}\left[\rho(\pi,\hat{P})\right]~.
    \end{equation}
    From the standard definition of $\cvar$, the objective $\srob(\pi)$ becomes
    \begin{equation*} \label{eq:srobust_2}
        \srob(\pi) =  (1-\lambda) \sum_{\omega \in \Omega} \Pf_{\omega} \cdot \rho(\pi, \hat{P}^\omega) + \lambda \cdot \max_{b \in \real} \left(b - \nicefrac{1}{(1-\alpha)}  \sum_{\omega \in \Omega} \Pf_{\omega} \cdot {\max\left\{0, b - \rho(\pi, \hat{P}^\omega)\right\}}\right)\,,
    \end{equation*}
    which can be formulated as the following linear program by introducing a variable $y_\omega = \max \{0, b - \rho(\pi, \hat{P}^\omega) \}$ as:
    \begin{equation} \label{eq:srobust_3}
        \begin{mprog}
            \maximize{y \in \Real^N, b\in \Real} (1-\lambda) \sum_{\omega \in \Omega} \Pf_{\omega} \cdot\rho(\pi, \hat{P}^\omega) + \lambda \cdot \max_{b \in \real} \left(b - \nicefrac{1}{(1-\alpha)}  \sum_{\omega \in \Omega} y_\omega \right) \\
            \stc y_\omega \ge f_\omega \cdot b - f_\omega \cdot \rho(\pi, \hat{P}^\omega), \qquad \forall \omega\in\Omega
            \cs y \ge \zero~.
        \end{mprog}
    \end{equation}

    Next, we express $\rho(\pi, \hat{P}^\omega)$ for each $\omega\in\Omega$ as the following optimization problem based on occupancy frequencies $u$ as follows:
    \begin{equation} \label{eq:lp_milp_unscaled}
        \begin{mprog}
            \rho(\pi,\hat{P}^\omega) = \maximize{u\in\Real^{S\times A}} \sum_{s\in\states} \sum_{a\in\actions} u(s,a) \cdot \sum_{s'\in\states} P(s,a,s') r(s,a,s') \\
            \stc \sum_{a \in \actions} u(s,a) = \sum_{s' \in \states}\sum_{a' \in \actions} \gamma\cdot u(s',a')\cdot P^{\omega}(s',a',s) +  p_0(s), \qquad s \in \states
            \cs u \ge \zero
            \cs u(s,a) = \pi(s,a) \cdot \sum_{a'\in\actions} u(s,a'), \qquad \forall s\in\states, a\in\actions
        \end{mprog}
    \end{equation}
    The linear formulation in~\eqref{eq:lp_milp_unscaled} is based on the dual linear program formulation of an MDP as described in (6.9.2) of~\citet{putterman1994DP}. The last constraint ensures that only occupancy frequencies for $\pi$ are considered and its correctness follows from Theorem 6.9.4 in~\citet{putterman1994DP}. Further, one can scale the constants in~\eqref{eq:lp_milp_unscaled} to get:
    \begin{equation} \label{eq:lp_milp_scaled}
        \begin{mprog}
            f_\omega \cdot \rho(\pi,\hat{P}^\omega) = \maximize{u\in\Real^{S\times A}} \sum_{s\in\states} \sum_{a\in\actions} u(s,a) \cdot \sum_{s'\in\states} P(s,a,s') \cdot r(s,a,s') \\
            \stc \sum_{a \in \actions} u(s,a) = \sum_{s' \in \states}\sum_{a' \in \actions} \gamma\cdot u(s',a')\cdot P^{\omega}(s',a',s) +  f_\omega \cdot p_0(s), \qquad s \in \states
            \cs u \ge \zero~,
            \cs u(s,a) = \pi(s,a) \cdot \sum_{a'\in\actions} u(s,a'), \qquad \forall s\in\states, a\in\actions
        \end{mprog}
    \end{equation}

    Finally, combining~\eqref{eq:srobust_3} with~\eqref{eq:lp_milp_scaled}  we can  formulate the optimization problem $\max_{\pi\in\Pi}$ as follows:
    \begin{equation*} \label{eq:qp_nonconvex}
        \extrarowheight=1mm
        \begin{array}{>{\displaystyle}c>{\displaystyle}r@{\hspace{1.5mm}}>{\displaystyle}l>{\displaystyle}l}
            \operatorname*{maximize}_{\substack{\pi \in  [0,1]^{S \times A}, \, b\in\real,\\
                    u\in\Real^{S\times A\times N}_+,\, y\in \real^N_{+}}}
            &\multicolumn{3}{>{\displaystyle}l}{
                \lambda \cdot \Big(b- \frac{1}{1-\alpha} \sum_{\omega \in \Omega} y(\omega)\Big) + (1-\lambda) \cdot \sum_{s \in \states} \sum_{a \in \actions} \sum_{\omega \in \Omega} u(s,a,\omega) \sum_{s' \in \states} r(s,a,s') \cdot P^{\omega}(s,a,s')}\\
            \operatorname{subject\,to} &y(\omega) - b \cdot f_{\omega} &\geq\; - \sum_{s \in \states} \sum_{a \in \actions}u(s,a,\omega) \sum_{s' \in \states} P^{\omega}(s,a,s') \cdot r(s,a,s'), &\omega \in \Omega, \\
            &\sum_{a \in \actions} u(s,a,\omega) &=\; \sum_{s' \in \states}\sum_{a' \in \actions} \gamma\cdot u(s',a',\omega)\cdot P^{\omega}(s',a',s) + f_{\omega} \cdot p_0(s), &s \in \states,\, \omega \in \Omega, \\
            &\sum_{a\in\actions} \pi(s,a) &=\; 1, &s \in \states, \\
            &u(s,a,\omega) &=\; \pi(s,a) \cdot \sum_{a'\in\actions} u(s,a',\omega),  &s \in \states,\, a \in \actions,\, \omega \in \Omega.
        \end{array}
        \extrarowheight=0mm
    \end{equation*}
    The MILP formulation then follows by upper-bounding the nonlinear constraint
    \[
    u(s,a,\omega) = \pi(s,a) \cdot \sum_{a'\in\actions} u(s,a',\omega)
    \]
    by replacing the right-hand side using the McCormick relaxation~(see, for example, Lemma 4.2 and the argument in~\citet{Petrik2016})
    \[
    u(s,a,\omega) \le  \pi(s,a) \cdot \frac{f_\omega}{1-\gamma}
    \]
    and the fact that $\pi(s,a) \in [0,1]$ and $u(s,a,\omega) \in [0, f_\omega / (1-\gamma)]$~(e.g.~Lemma~C.10 in~\citet{Petrik2010a}). The optimality of the MILP formulation for deterministic policies holds because the McCormick relaxation is precise for the extreme values of the interval when $\pi(s,a) \in\{0,1\}$.
\end{proof}

\begin{proof}[Proof of \cref{thm:s_rect_soft_robust}]
    Without loss of generality, consider a transition model $P \in \mathcal{P}^{D}$ such that $P$ can be represented using a distribution $\zeta \in \Xi$, that is $P = \sum_{\omega\in\Omega} \zeta_{\omega} \hat{P}^{\omega}$. Notice that $\forall s\in\states, \, P_s$ can be written as $\sum_{\omega\in\Omega} \zeta_{\omega} \hat{P}_{s}^{\omega}$. By construction of $\mathcal{P}^R$ and since $\zeta \in \Xi$, it follows that $\forall s \in\states, \, P_{s} \in \mathcal{P}_{s}^R$. Since $\mathcal{P}^{R}$ is a state-wise Cartesian product of $\mathcal{P}_{s}^R$, we can conclude that $\mathcal{P}^D \subseteq \mathcal{P}^R$.

    The second part of the proposition thus immediately follows as
    $\forall \pi \in \Pi, \, \min_{P \in \mathcal{P}^R}  \rho(\pi,P) \leq \min_{P \in \mathcal{P}^D} \rho(\pi,P)$ implies $\rho^R(\pi) \leq \rho^D(\pi) \, \forall \pi\in\Pi$.
\end{proof}

\begin{proof}[Proof of \cref{prop:Bellman_update}]
    Recall the s-rectangular soft-robust MDP objective.
    \begin{align}
        \rrob(\pi) \;=\; \min_{P \in \mathcal{P}^R} \; \rho \left(\pi, P \right)\,.
    \end{align}
    From~\eqref{eq:S-rect}, the S-rectangular soft-robust Bellman optimality operator can be written as
    \begin{align}
        (\Bell_{\mathcal{P}^R} \, v)(s) = \max_{d\in\Delta^A} \min_{P_s \in \mathcal{P}^{R}_s } \; \sum_{a\in\actions} d_a P_{s,a}\tr \left(r_{s,a}  + \gamma \cdot v \right). \label{s-bellman}
    \end{align}
    Using simple algebraic manipulations, we can show that the dual of the right-hand-side of~\eqref{s-bellman} can be formulated as the proposed linear program.
    \begin{equation}
        \begin{aligned}
            &\max_{\substack{d \in \Delta^A,\,b\in\real\\y\in\real_+^{|\Omega|}}} \quad (1- \lambda) \sum_{\substack{a \in \actions\\\omega\in\Omega}} d_a  \Pf_\omega (\hat{P}_{s,a}^{\omega})\tr z_{s,a}  \\
            &\qquad\qquad\qquad + \lambda \Big(b - \frac{1}{1-\alpha} \sum_{\omega\in\Omega} \Pf_\omega \cdot y_\omega \Big) \\
            &\st \qquad y_\omega \geq b- \sum_{a \in \actions} d_a (\hat{P}_{s,a}^{\omega})\tr z_{s,a}, \quad \omega \in \Omega\,. \\
        \end{aligned}
    \end{equation}
    where $z_{s,a} = r_{s,a} + \gamma  \cdot v$.
\end{proof}
\section{Proofs: \cref{sec:approximation_error}} \label{app:bounds_proofs}

In this section, we describe the technical results that underlie the proof of \cref{thm:lower_bound}. The following lemma bounds the difference between a convex combination of occupancy frequencies and the occupancy frequency of the convex combination of transition functions. This serves as the main technical tool when bounding the difference between dynamic and static objectives.
\begin{lem} \label{lem:convex_difference}
    Consider stochastic matrices $P_i \in (\Delta^S)^S, i = 1,\ldots,N$ with occupancy frequencies $h_i = (\eye - \gamma \cdot P_i\tr)^{-1} p_0$. Let $P_\beta = \sum_{i=1}^N \beta_i \cdot P_i$ be the convex combination of $P_i$ for a given $\beta\in\Delta^N$ and let $h_\beta = (\eye - \gamma \cdot P_\beta\tr)^{-1} p_0$ be its occupancy frequency. The convex combination of individual occupancy frequencies is denoted by $e_\beta = \sum_{i=1}^N \beta_i \cdot h_i$. Then:
    \[ \norm{h_\beta - e_\beta }_1 \le \frac{\gamma}{1-\gamma} \cdot \epsilon_1~,\]
    when $\norm{h_i - h_j}_1\le \epsilon_1$ for each $i = 1,\ldots, N$ and $j = 1, \ldots, N$.
\end{lem}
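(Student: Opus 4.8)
The plan is to exploit the fixed-point characterization of occupancy frequencies and then close a self-referential inequality using the pairwise hypothesis. First I would rewrite each $h_i = (\eye - \gamma \cdot P_i\tr)^{-1} p_0$ as the equivalent linear relation $h_i = p_0 + \gamma \cdot P_i\tr h_i$, and likewise $h_\beta = p_0 + \gamma \cdot P_\beta\tr h_\beta = p_0 + \gamma \sum_{i=1}^N \beta_i \cdot P_i\tr h_\beta$, using that transposition and the sum are linear so $P_\beta\tr = \sum_i \beta_i P_i\tr$. Averaging the first relation over $\beta$ and using $\sum_i \beta_i = 1$ gives $e_\beta = \sum_{i=1}^N \beta_i \cdot h_i = p_0 + \gamma \sum_{i=1}^N \beta_i \cdot P_i\tr h_i$. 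Subtracting these two expressions, the $p_0$ terms cancel and I obtain the clean identity
\[
h_\beta - e_\beta \;=\; \gamma \sum_{i=1}^N \beta_i \cdot P_i\tr (h_\beta - h_i)\,.
\]

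Next I would pass to $L_1$ norms. Applying the triangle inequality, convexity of $\beta$, and \cref{lem:matrix_norm} (which guarantees $\norm{P_i\tr}_1 = 1$ for a stochastic matrix), this yields
\[
\norm{h_\beta - e_\beta}_1 \;\le\; \gamma \sum_{i=1}^N \beta_i \cdot \norm{h_\beta - h_i}_1\,.
\]
The remaining difficulty is that the right-hand side involves $h_\beta$, which is \emph{not} one of the $h_i$, so the hypothesis $\norm{h_i - h_j}_1 \le \epsilon_1$ does not apply to $\norm{h_\beta - h_i}_1$ directly. To route around this I would insert $e_\beta$ and bound $\norm{h_\beta - h_i}_1 \le \norm{h_\beta - e_\beta}_1 + \norm{e_\beta - h_i}_1$, observing that $\norm{e_\beta - h_i}_1 = \norm{\sum_{j=1}^N \beta_j (h_j - h_i)}_1 \le \sum_{j=1}^N \beta_j \cdot \epsilon_1 = \epsilon_1$ by the pairwise hypothesis together with convexity.

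Finally I would combine these. Writing $\Delta := \norm{h_\beta - e_\beta}_1$ and substituting the last bound into the norm inequality gives the self-referential estimate $\Delta \le \gamma \sum_i \beta_i (\Delta + \epsilon_1) = \gamma \cdot (\Delta + \epsilon_1)$, whence $(1-\gamma)\Delta \le \gamma \cdot \epsilon_1$ and $\Delta \le \frac{\gamma}{1-\gamma} \cdot \epsilon_1$, as claimed. The main obstacle is precisely this bootstrapping step: the target bound on $\norm{h_\beta - e_\beta}_1$ appears on both sides through $\norm{h_\beta - h_i}_1$, and the trick that makes the argument work is to funnel that dependence through the average $e_\beta$ so that the recursion closes with the contraction factor $\gamma < 1$.
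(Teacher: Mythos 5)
Your proposal is correct, but it closes the argument by a different mechanism than the paper. Both proofs start from the same fixed-point identities $h_i = p_0 + \gamma P_i\tr h_i$, $h_\beta = p_0 + \gamma P_\beta\tr h_\beta$, and both ultimately rely on $\norm{e_\beta - h_i}_1 \le \epsilon_1$ and on \cref{lem:matrix_norm}. The divergence is in how the self-reference through $h_\beta$ is resolved. The paper adds and subtracts $\gamma P_\beta\tr e_\beta$ to isolate the term $\gamma P_\beta\tr (h_\beta - e_\beta)$, moves it to the left-hand side, and inverts the matrix explicitly, obtaining the closed form
\begin{equation*}
h_\beta - e_\beta \;=\; \gamma \sum_{i=1}^N \beta_i \cdot (\eye - \gamma\cdot P_\beta\tr)^{-1} P_i\tr (e_\beta - h_i)\,,
\end{equation*}
after which a single pass of norm bounds suffices, using submultiplicativity and the Neumann-series estimate $\norm{(\eye - \gamma P_\beta\tr)^{-1}}_1 \le 1/(1-\gamma)$. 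You instead keep the identity $h_\beta - e_\beta = \gamma \sum_i \beta_i P_i\tr (h_\beta - h_i)$ in its raw form, take norms immediately (needing only $\norm{P_i\tr}_1 = 1$, not any bound on a matrix inverse), and recover the factor $1/(1-\gamma)$ by solving the scalar inequality $\Delta \le \gamma(\Delta + \epsilon_1)$. These are two faces of the same contraction: your scalar bootstrap is exactly what the Neumann series does term by term, but performed after taking norms rather than before. Your route is marginally more elementary; its one implicit requirement is that $\Delta = \norm{h_\beta - e_\beta}_1$ be finite (otherwise $\Delta \le \gamma\Delta + \gamma\epsilon_1$ yields nothing), which holds trivially here since all occupancy frequencies are well-defined vectors in $\real^S$ with $\norm{h}_1 \le 1/(1-\gamma)$ — worth one sentence if you write this up. The paper's explicit-inversion route avoids even that caveat, at the price of invoking the inverse-norm bound.
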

\begin{proof}
    Recall that the following identities hold for the occupancy frequencies~\cite{putterman1994DP}:
    \begin{align} \label{eq:lem_cd_d}
        h_i &= \gamma \cdot P_i\tr h_i + p_0, \quad i = 1, \ldots, N, &
        h_\beta &= \gamma \cdot P_\beta\tr h_\beta + p_0~.
    \end{align}
    Using the identity above and the fact that $\beta\in\Delta^S$, we obtain a similar expression for $e_\beta$:
    \begin{equation} \label{eq:lem_cd_e}
        e_\beta = \gamma \sum_{i=1}^N \beta_i \cdot P_i\tr h_i + p_0~.
    \end{equation}
    Because $h_\beta$ need not be a convex combination of $h_i$ we use the following representation of the difference between $h_\beta$ and the convex combination $e_\beta$ of $h_i$:
    \begin{align*}
        h_\beta - e_\beta &= \gamma\cdot P_\beta\tr h_\beta - \gamma \sum_{i=1}^N \beta_i \cdot P_i\tr h_i &&\text{from \eqref{eq:lem_cd_d} and \eqref{eq:lem_cd_e}} \\
        &= \gamma\cdot P_\beta\tr h_\beta - \gamma\cdot P_\beta\tr e_\beta + \gamma\cdot P_\beta\tr e_\beta - \gamma \sum_{i=1}^N \beta_i \cdot P_i\tr h_i &&\text{add $0$} \\
        &= \gamma\cdot P_\beta\tr h_\beta - \gamma\cdot P_\beta\tr e_\beta + \gamma\cdot \sum_{i=1}^N \beta_i \cdot P_i\tr e_\beta - \gamma \sum_{i=1}^N \beta_i \cdot P_i\tr h_i &&\text{definition of $P_\beta$} \\
        &= \gamma\cdot P_\beta\tr (h_\beta - e_\beta) + \gamma\cdot \sum_{i=1}^N \beta_i \cdot P_i\tr (e_\beta - h_i)~. &&\text{simplify}
    \end{align*}
    Next, subtracting $\gamma\cdot P_\beta\tr (h_\beta - e_\beta)$ from both sides of the equality above, and multiplying by the appropriate matrix inverse leads to:
    \begin{equation}\label{eq:occupancy_difference}
        h_\beta - e_\beta \;=\; \gamma \sum_{i=1}^N \beta_i \cdot (\eye - \gamma\cdot P_\beta\tr)^{-1} P_i\tr (e_\beta - h_i)~.
    \end{equation}
    Applying the $L_1$ norm to both sides of \eqref{eq:occupancy_difference} we get that:
    \begin{align*}
        \norm{h_\beta - e_\beta}_1 &= \norm{\gamma \sum_{i=1}^N \beta_i \cdot (\eye - \gamma\cdot P_\beta\tr)^{-1} P_i\tr (e_\beta - h_i)}_1  \\
        & \leq \gamma \sum_{i=1}^N \beta_i \cdot \norm{(\eye - \gamma\cdot P_\beta\tr)^{-1} P_i\tr (e_\beta - h_i)}_1 &&\text{from triangle inequality} \\
        &\le \gamma \sum_{i=1}^N \beta_i \cdot \norm{(\eye - \gamma\cdot P_\beta\tr)^{-1} P_i\tr}_1 \norm{e_\beta - h_i}_{1} &&\text{from } \norm{Ax}_1 \leq \norm{A}_1\norm{x}_{1} \\
        &\le \gamma \sum_{i=1}^N \beta_i \cdot \norm{(\eye - \gamma\cdot P_\beta\tr)^{-1}}_1 \norm{P_i\tr}_1 \norm{e_\beta - h_i}_{1} &&\text{from $\norm{A B} \le \norm{A} \cdot \norm{B}$} \\
        &\le \gamma \sum_{i=1}^N \beta_i \cdot \norm{(\eye - \gamma\cdot P_\beta\tr)^{-1}}_1 \norm{e_\beta - h_i}_{1} ~.
    \end{align*}
    Then \cref{lem:matrix_norm} combined with the Neumann series representation of matrix inverse implies that $\norm{(\eye - \gamma\cdot P_\beta\tr)^{-1}}_1 \le 1/(1-\gamma)$. It can also be  shown readily by basic algebra that $\norm{e_\beta - h_i}_{1}\le \epsilon_1$. The desired result then follows because $\beta\in\Delta^S$.
\end{proof}

\begin{proof}[Proof of \cref{lem:static_dynamic_difference}]
    Before proving the result, we recall several necessary definitions and identities.
    The static and dynamic returns are defined as
    \begin{subequations} \label{eq:sd_proof_returns}
        \begin{align}
            \label{eq:sd_proof_static}
            \srob(\pi) &= \min_{\xi\in\Xi} \; \Exp{\hat{P} \sim \xi}{\rho \bigl(\pi, \hat{P}\bigr)}  \\
            \label{eq:sd_proof_dynamic}
            \drob(\pi) &= \min_{\xi\in\Xi} \; \rho \left(\pi, \Exp{\hat{P} \sim \xi}{\hat{P}}\right)~.
        \end{align}
    \end{subequations}
    Recall also that the return of a policy $\pi$ in an MDP with the transition matrix $P_\pi \in (\Delta^S)^S$ can be expressed in terms of the occupancy frequency $h_\pi$, defined in~\eqref{eq:occupancy_frequency}, as
    \begin{equation} \label{eq:return_occupancy}
        \rho(\pi,P) = p_0\tr v_\pi = p_0\tr (\eye - \gamma \cdot P_\pi)^{-1} r_\pi =  h_\pi\tr r_\pi ~.
    \end{equation}
    Now, let $\xi^S \in \Delta^{|\Omega|}$ and $\xi^D \in \Delta^{|\Omega|}$ be optimal in~\eqref{eq:sd_proof_static} and~\eqref{eq:sd_proof_dynamic}. Then the soft-robust returns in \eqref{eq:sd_proof_returns} can be expressed in terms of their occupancy frequencies using \eqref{eq:return_occupancy} as
    \begin{equation} \label{eq:return_frequencies}
        \begin{aligned}
            \srob(\pi) &= \Exp{\hat{P} \sim \xi^S}{\rho \bigl(\pi, \hat{P}\bigr)} = \sum_{\omega\in\Omega} \xi^S_\omega \cdot p_0\tr (\eye - \gamma \cdot \hat{P}_\pi^\omega)^{-1} r_\pi  = \sum_{\omega\in\Omega} \xi^S_\omega \cdot (h_\pi^\omega)\tr r_\pi \\
            \drob(\pi) &= \rho \left(\pi, \Exp{\hat{P} \sim \xi^D}{\hat{P}}\right) = p_0\tr \left(\eye - \gamma \sum_{\omega\in\Omega} \xi^D_\omega \cdot \hat{P}_\pi^\omega \right)^{-1} r_\pi = (h_\pi^{\xi^D})\tr r_\pi ~.
        \end{aligned}
    \end{equation}
    where $h_\pi^{\xi^S}$ = $\left(\eye - \gamma \sum_{\omega\in\Omega} \xi^S_\omega \cdot \hat{P}_\pi^\omega \right)^{-1}$ and $h_\pi^\omega = (\eye - \gamma \cdot \hat{P}_\pi^\omega)^{-1}$.

    Next, we get for each $\pi\in\Pi$ that
    \begin{align*}
        \drob(\pi) - \srob(\pi) &\le \Exp{\hat{P} \sim \xi^D}{\rho \bigl(\pi, \hat{P}\bigr)} - \rho \left(\pi, \Exp{\hat{P} \sim \xi^D}{\hat{P}}\right) && \text{From \eqref{eq:return_frequencies} and $\xi^D\in\Xi$} \\
        &= (h_\pi^{\xi^D})\tr r_\pi - \sum_{\omega\in\Omega} \xi^h_\omega \cdot (h_\pi^\omega)\tr r_\pi && \text{From \eqref{eq:return_frequencies}} \\
        &\le \left\lVert h_\pi^{\xi^D} - \sum_{\omega\in\Omega} \xi^D_\omega \cdot h_\pi^\omega \right\rVert_1 \cdot \lVert r_\pi \rVert_\infty && \text{Holder's inequality} \\
        &\le \frac{\gamma \cdot \epsilon_1}{1-\gamma} \cdot r_{\max}  &&\text{From \cref{lem:convex_difference}}~.
    \end{align*}
    Similarly, the reverse inequality follows as
    \begin{align*}
        \srob(\pi) - \drob(\pi) &\le \Exp{\hat{P} \sim \xi^S}{\rho \bigl(\pi, \hat{P}\bigr)} - \rho \left(\pi, \Exp{\hat{P} \sim \xi^S}{\hat{P}}\right) && \text{From \eqref{eq:return_frequencies} and $\xi^S\in\Xi$} \\
        &= \sum_{\omega\in\Omega} \xi^S_\omega \cdot (h_\pi^\omega)\tr r_\pi - (h_\pi^{\xi^S})\tr r_\pi  && \text{From \eqref{eq:return_frequencies}} \\
        &\le \left\lVert \sum_{\omega\in\Omega} \xi^S_\omega \cdot h_\pi^\omega - h_\pi^{\xi^S} \right\rVert_1 \cdot \lVert r_\pi \rVert_\infty && \text{Holder's inequality} \\
        &\le \frac{\gamma \cdot \epsilon_1 (\pi)}{1-\gamma} \cdot r_{\max}  &&\text{From \cref{lem:convex_difference}}~.
    \end{align*}
    Combining the two inequalities above, we obtain that
    \[
    \lvert \srob(\pi) - \drob(\pi) \rvert \;\le\; \frac{\gamma \cdot \epsilon_1 (\pi)}{1-\gamma} \cdot r_{\max}~,
    \]
    which proves the result.
\end{proof}

\begin{proof}[Proof of \cref{thm:error_bound_rectangularization}]
    To establish this bound, define a robust Bellman value operator $\Topt^{\pi,\xi} :  \Real^S \to \Real^S$ for any policy $\pi\in\Pi$, nature's response $\xi\in\Xi$, value function $v\in\Real^S$, and state $s\in\states$ as
    \[
    \left(\Topt^{\pi,\xi} v \right)_s \;=\; \sum_{a\in\actions} \sum_{\omega\in\Omega} \xi_\omega \cdot \pi_{s,a} \cdot (\hat{P}_{s,a}^\omega)\tr (r_{s,a} + \gamma\cdot v) ~.
    \]
    The operator $\Topt^{\pi,\xi}$ is linear and has a unique fixed point $v^{\pi,\xi} \in \Real^S$ which satisfies $\Topt^{\pi,\xi} v^{\pi,\xi} = v^{\pi,\xi}$~\cite{Ho2018FastBellman}.
    Similarly, we define a robust S-rectangular Bellman value operator $\Topt^{\pi} : \real^S \to \real^S$ defined for any policy $\pi\in\Pi$, value function $v\in\Real^S$, and state $s\in\states$ as
    \[
    \left(\Topt^{\pi} v \right)_s \;=\; \min_{\xi\in\Xi} \; \left( \Topt^{\pi,\xi  }v \right)_s~.
    \]
    Note that for a fixed policy $\pi\in\Pi$, the operator $\Topt^\pi$ is equivalent to the Bellman operator in MDPs and satisfies the same properties. Let $\pi^{*}_{D}$ be the optimal policy that optimizes $\rho^D(\pi)$.
    Equipped with the definitions above, we proceed to bound the error $\drob(\pi\opt_D) - \rrob(\pi\opt_D)$. Let $\xi\opt_D$ be the minimizer for $\drob(\pi\opt_D)$ in~\eqref{eq:non_rect_soft_robust} and therefore
    \[
    \drob(\pi\opt_D) \;=\; p_0\tr v^{\pi\opt_D,\xi\opt_D} ~.
    \]
    Similarly, let $\xi\opt_R$ be the minimizer to $\rrob(\pi\opt_D)$ in~\eqref{eq:objective_rectangular} and therefore
    \[
    \rrob(\pi\opt_D) \;=\; p_0\tr v^{\pi\opt_D,\xi\opt_R} ~.
    \]
    Exploiting the fact that $\Topt^\pi$ is an MDP Bellman operator and using standard arguments for MDP value functions (for example, Corollary 4 in \cite{Ho2018FastBellman}) we get that:
    \begin{align*}
        \drob(\pi\opt_D) - \rrob(\pi\opt_D) &= p_0\tr v^{\pi\opt_D,\xi\opt_D} - p_0\tr v^{\pi\opt_D,\xi\opt_R}
        \le \norm{p_0}_1 \cdot \norm{v^{\pi\opt_D,\xi\opt_D} - v^{\pi\opt_D,\xi\opt_R}}_\infty
        = \norm{v^{\pi\opt_D,\xi\opt_R} - v^{\pi\opt_D,\xi\opt_R}}_\infty \\
        &\le \frac{1}{1 - \gamma} \cdot \norm{\Topt^{\pi\opt_D} \,v^{\pi\opt_D,\xi\opt_R} - v^{\pi\opt_D,\xi\opt_R} }_\infty \le \frac{1}{1 - \gamma} \cdot \epsilon_2~,
    \end{align*}
    for the $\epsilon_2$ stated in the theorem. Finally, we employ \cref{lem:difference_any} combined with \cref{thm:s_rect_soft_robust} to show that
    \[
    0 \le \drob(\pi\opt_D) - \rrob(\pi\opt_R) \le \drob(\pi\opt_D) - \rrob(\pi\opt_D) \le \frac{1}{1 - \gamma} \cdot \epsilon_2~,
    \]
    which shows the desired result.
\end{proof}

\begin{proof}[Proof of \cref{thm:lower_bound}]
    The result follows by algebraic manipulation as
    \begin{align*}
        \srob(\pi\opt_S) - \srob(\pi\opt_R) &= \srob(\pi\opt_S) \overbrace{- \drob(\pi\opt_D) + \drob(\pi\opt_D)}^{=0} \overbrace{- \drob(\pi\opt_R) + \drob(\pi\opt_R)}^{=0} - \srob(\pi\opt_R) \\
        &= \underbrace{\srob(\pi\opt_S) - \drob(\pi\opt_D)}_{\text{\cref{lem:difference_any} \& \cref{lem:static_dynamic_difference}}} + \drob(\pi\opt_D) - \drob(\pi\opt_R) + \underbrace{ \drob(\pi\opt_R) - \srob(\pi\opt_R)}_{\text{\cref{lem:static_dynamic_difference}}}\\
        &\le  \frac{2 \gamma \cdot r_{\max} \cdot \epsilon_1 }{1-\gamma}  + \drob(\pi\opt_D) - \drob(\pi\opt_R) \\
        &=  \frac{2 \gamma \cdot r_{\max} \cdot \epsilon_1 }{1-\gamma}  + \underbrace{\drob(\pi\opt_D) - \drob(\pi\opt_R)}_{\text{\cref{thm:error_bound_rectangularization}}} \\
        &\le  \frac{2 \gamma \cdot r_{\max} \cdot \epsilon_1 }{1-\gamma}  + \frac{\epsilon_2}{1-\gamma} ~.
    \end{align*}
\end{proof}

\section{Experimental Details}
\label{app:experimental_results}
\subsection{Baselines}
We describe below the two custom baselines algorithms Soft-Robust Soft Actor-Critic (SR-SAC) and Robust Soft Actor-Critic (R-SAC) algorithms that we use for comparing the performance of the dynamic soft-robust objective.
The SR-SAC algorithm extends the Soft Actor-Critic (SAC)~\cite{haarnoja18bSA} to use soft-robust updates~\cite{Derman2018Soft-RobustAC}. Similarly, the R-SAC algorithm extends the SAC algorithm to use robust updates~\cite{Mankowitz2020RobustRL,iyengar2005RMDP}.
The SAC algorithm is a variant of the standard policy iteration algorithm that learns maximum-entropy optimal policies. We note that the soft-robust and robust updates only affect the policy-evaluation step of the SAC algorithm. Hence, we will only describe the change in the policy evaluation step.
In the policy evaluation step, the SAC algorithm estimates the value function and action-value function of a policy according to an objective that maximizes the future expected returns and entropy of the optimal policy.

Let $V : \real^{S} \to \real^{S}$ and $Q : \real^{S\times A} \to \real^{S \times A}$ denote the value function and action-value function of policies respectively.
We will refer to the function approximators used by the SAC algorithm to represent the action-value function and the value function as the Q-network and V-network respectively.

The SAC algorithm optimizes the Q-network and V-network in an off-policy manner to minimize the soft-Bellman residual error.
\begin{align}
    J(Q) = \mathbb{E}_{s_t ,a_t \sim D}[(Q(s_t,a_t) - \bar{Q}(s_t,a_t))^2] \\
    \text{where } \bar{Q}(s_t,a_t) =   \mathbb{E}_{s_{t+1} \sim D}[r_{s_t,a_t,s_{t+1}} + \gamma \bar{V}(s_{t+1})] \\
    J(V) = \mathbb{E}_{s_t \sim D}[(V(s_t) - \mathbb{E}_{a_t \sim \pi(s_t)}[Q(s_t,a_t)-  \log(\pi(s_t,a_t))]^{2}]
\end{align}
where $D$ is data collected using some behavior policy $\pi\in\Pi$ and $\bar{V}(s_t)$ is the value function estimated at state $s_t$ as predicted by a target value function network, used for stabilizing training~\cite{haarnoja18bSA}. The weights of the target value function are updated as exponentially moving weighted average of the weights of the V-network.


In the SR-SAC algorithm, we optimize the Q-network to minimize the soft-robust Bellman residual error.
\begin{align}
    J_{soft-robust}(Q) = \mathbb{E}_{s_t ,a_t \sim \bar{P}}[(Q(s_t,a_t) - \bar{Q}(s_t,a_t))^2] \\
    \text{where } \bar{Q}(s_t,a_t) = \sum_{\omega \in \Omega} \Pf_{\omega} \cdot \mathbb{E}_{s_{t+1} \sim \hat{P}^{\omega}(s_t,a_t)}[r_{s_t,a_t,s_{t+1}} + \gamma \bar{V}(s_{t+1})] \\
    J_{soft-robust}(V) = \mathbb{E}_{s_t \sim \bar{P}}[(V(s_t) - \mathbb{E}_{a_t \sim \pi(s_t)}[Q(s_t,a_t)-  \log(\pi(s_t,a_t))]^{2}]
\end{align}
Notice that, in this case, the data samples for the updates are collected by simulating the nominal model $\bar{P}$.

Similarly, in the R-SAC algorithm, we optimize the Q-network to minimize the robust Bellman residual error.
\begin{align}
    J_{robust}(Q) = \mathbb{E}_{s_t ,a_t \sim \bar{P}}[(Q(s_t,a_t) - \bar{Q}(s_t,a_t))^2] \\
    \text{where } \bar{Q}(s_t,a_t) = \min_{\omega\in\Omega} \mathbb{E}_{s_{t+1} \sim \hat{P}^{\omega}(s_t,a_t)}[r_{s_t,a_t,s_{t+1}} + \gamma\bar{V}(s_{t+1})] \\
    J_{robust}(V) = \mathbb{E}_{s_t \sim \bar{P}}[(V(s_t) - \mathbb{E}_{a_t \sim \pi(s_t)}[Q(s_t,a_t)-  \log(\pi(s_t,a_t))]^{2}]
\end{align}

\subsection{Population Domain}

This MDP consists of $51$ states, each represents the current pest population as determined by trapping ($0$ means no pest population). There are $5$ actions available, with each action representing the use of an increasingly potent pesticide. The true transition probabilities are based on a logistic model of population growth as described in~\cite{Tirizoni2018PolicyC}. The discount factor is $\gamma = 0.9$.

To compute the posterior distribution over $\hat{P}$, we gather $300$ state-action transition samples from a single episode. Using these transition samples, we fit an exponential population model~\cite{Kery2012Bayesian} using the JAGS modeling language~\cite{Plummer2003JAGS} and sample $100$ posterior samples using MCMC. We use these samples to formulate and solve the MILP in \cref{fig:MILP} and to run \cref{alg:norbu_s}. We use confidence $\alpha = 0.7$ for both the percentile criterion and soft-robust objective for the evaluation. We also use $\lambda = 0.5$ for the soft-robust objective. We use 100 samples from the posterior distribution both to compute and evaluate the methods' returns.

\subsection{Cancer Simulator}
The cancer simulator models the growth of tumors in cancer patients. The state is a 4-dimensional vector that captures the dynamics of the tumor's growth. The monthly binary action determines whether to administer chemotherapy to the patient~\cite{gottesman2020interpretable,Ribba2012ATG}. The discount factor $\gamma$ is set to 0.9.

\begin{table}
    \centering
    \begin{tabular}{ p{6cm}|r }
        \toprule
        \multicolumn{1}{c|}{\emph{Parameter}} & \multicolumn{1}{c}{\emph{Value}} \\
        \midrule
        Features & 2nd order polynomial \\
        Policy learning rate & 3e-4 \\
        Q-value network learning rate & 3e-4 \\
        V-network learning rate & 3e-4 \\
        Train iterations & 2000 \\
        Episodes per iteration & 30 \\
        Test episodes per transition model & 100 \\
        Train transition models & 50 \\
        Test transition models & 100 \\
        States sampled per model and update & 100 \\
        Batch size & 150 \\
        Target update rate & 0.01 \\
        Discount factor & 0.9 \\
        Hidden layers & (400,300) \\
        Activation & Relu \\
        \bottomrule
    \end{tabular}
    \caption{Cancer Simulator: SR-SAC and R-SAC} \label{tab:params_sac}
\end{table}

We model the true transition probability model $P_{s_t,a_t}$ as a multivariate Normal random variable with mean $w \in \real^l$ and diagonal covariance matrix $\Sigma \in \real^{l \times l}$. The mean and variance are linearly weighted functions of state features. We sample a batch of data consisting of $600$ samples (20 trajectories) using the cancer simulator with transition noise=0.03 and the $\epsilon$-greedy behavior policy provided by~\cite{gottesman2020interpretable} with $\epsilon=0.1$. Using the sampled data, we train a multivariate Bayesian linear regression model to predict the posterior distribution of weights $w$ and the covariance matrix $\Sigma$. We assume a Normal prior $\mathcal{N}(0,1)$ for each element of the weight vector $w$ and a $\operatorname{HalfNormal}(0.001)$ prior for the elements of the covariance matrix $\Sigma$. We construct the train uncertainty set as shown in \cref{alg:norbu_s} by sampling 50 weight vectors and covariance matrices from the posterior distribution using the MCMC algorithm~\cite{hoffman2011nouturn}.
We similarly construct the test uncertainty set by sampling 100 weight vectors and covariance matrices from the posterior distribution. We keep the test and the train sets consistent across all the experiments on the cancer simulator. \Cref{tab:params_sac,tab:params_srvi} summarizes the parameters of the methods we compare.

\begin{table}
    \centering
    \begin{tabular}{p{6cm}|r }
        \toprule
        \multicolumn{1}{c|}{\emph{Parameter}} & \multicolumn{1}{c}{\emph{Value}} \\
        \midrule
        Features & 2nd order polynomial \\
        Train iterations & 150 \\
        Episodes per iteration & 30 \\
        Test episodes per transition model & 100 \\
        Train transition models & 50 \\
        Test transition models & 100 \\
        Batch size & 150 \\
        States sampled per model and update & 100 \\
        Discount factor & 0.9 \\
        \bottomrule
    \end{tabular}
    \caption{Cancer Simulator: SRVI} \label{tab:params_srvi}
\end{table}
\subsection{Additional Experiments}
\subsubsection{Inventory Management Problem}

This domain models a common dilemma encountered by retailers while procuring goods for future sales. We assume an infinite horizon time period during which a retailer procures and sells only 1 type of item. The procured goods are stored in an inventory of limited capacity. Goods stored in the inventory have a fixed holding cost-per-unit per time step. The demands received by the retailer at time $t$ are served at time $t+1$ with goods available in the inventory. Any demand that is not satisfied is backlogged with a fixed backlog cost-per-unit per time step. At every time step, the retailer attempts clearance of as many backlogged demands as possible with the available inventory. The states in this context represent the current quantity of goods in the inventory and actions represent the orders placed by the retailer.
In our setup, the maximum order quantity is $40$ units and the inventory has a fixed capacity of 50 units. The minimum demand is $0$ and the maximum demand is $50$. For the sake of simplicity, we disable backlogging in our experiments. The variable cost, per-unit purchase price, holding cost, backlog cost, and sales price are $2.49$,$3.99$, $0.1$, $0.15$ and $4.99$ respectively. We set risk level $\alpha=0.8$ and discount factor $\gamma=0.99$.
The reward at any time step is the profit incurred from the sales. The demand for goods per time step is stochastic, which in turn makes the transitions stochastic. We assume that the demand comes from the Poisson distribution with an unknown rate $\lambda$. We assume that the true value of $\lambda$ is $10$.

To generate a batch of data, we use a sampling policy that always purchases the maximum available goods. This enables us to sample uncensored demands which in turn allows us to compute the posterior distribution of the demands analytically.
We model the posterior distribution of demands as a Gamma distribution and assume a Gamma prior with shape=4 and scale=6. We fit the posterior distribution using a batch of 50 transitions obtained using the sampling policy and true demand distribution. We sample 100 demand models from the posterior distribution, for training and testing the SRVI RL agent against other baseline RL agents. We assume that the initial distribution is uniform across all the states.

~\cref{fig:inventory} compares the performance of the dynamic soft-robust algorithm namely the SRVI algorithm with RSVF~\cite{russel2019BeyondCR} (percentile-criterion) and Bayesian Confidence Region~\cite{russel2019BeyondCR} for different values of $\lambda$. Since BCR is not meant to optimize the mean returns, we set its $\lambda$ value to 0 throughout the experiment.
Notice that, as expected, the mean performance of the SRVI agent increases with a decrease in $\lambda$ and the $\cvar$ performance increases with an increase in $\lambda$. Further, BCR performs very poorly as compared to the SRVI and RSVF agents. This behavior is not surprising since in contrast to RSVF and SRVI, BCR ambiguity sets are constructed from confidence regions which are often very large and make policies unnecessarily conservative. Although the mean performance of the SRVI and RSVF agent are comparable in this domain, the SRVI agent outperforms both, BCR and RSVF in robust performance at $\lambda=1.0$.
\begin{figure}[h!]
    \centering
    \includegraphics[width=0.5\linewidth]{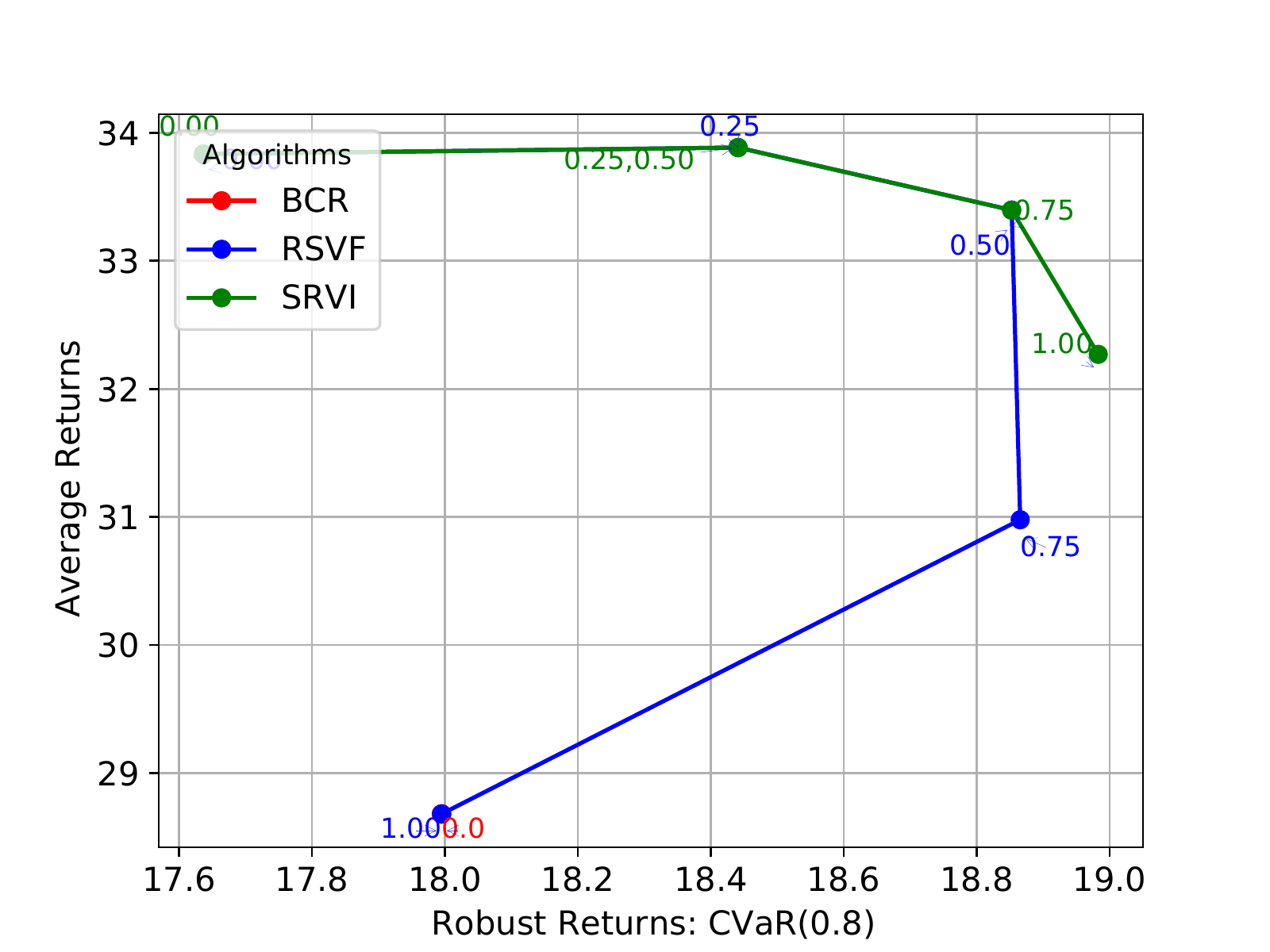}
    \caption{Comparison of Mean and Robust Performance of SRVI, RSVF and BCR in Inventory domain for different values of $\lambda$ indicated by the overlay text. RSVF outperforms RSVF and BCR for appropriately chosen $\lambda$ values.} \label{fig:inventory}
\end{figure}

\subsubsection{Riverswim}
The Riverswim domain is inspired from the Riverswim domain in~\cite{Strehl2004Exploration}. This domain is modeled as 20-states 2-actions MDP with discount factor $\gamma=0.95$. The states $s_1, \ldots, s_{20}$ represent the current position of the agent in the river and the actions $a_1$ and $a_2$ represent the act of swimming 1 unit in the direction of the river's current and 1 unit against the direction of the river's current respectively. The direction of the river's current is from $s_{20} \xrightarrow[]{} s_1$. Choosing action $a_1$ in $s_{i}$ results in transitioning to the state $s_{i-1}$ with probability 1. On the other hand, if the agent chooses $a_2$, it will transition to the state $s_{i+1}$ with probability $0.2$, or to the state $s_{i-1}$ with probability $0.5$ or stay in $s_{i}$ itself with probability $0.3$. In states where $s_{i-1}$ or $s_{i+1}$ is undefined, the agent will continue to stay in the current state with the respective probability. The reward received on reaching state $s_{20}$ is +100. The agent also received a reward of +5 each time it moves 1 step closer to state $s_{20}$. Hence to maximize its returns, the agent has to swim towards state $s_{20}$ i.e., against the direction of the river's current. We assume that the initial distribution is uniform across states.

The posterior distribution over $\bar{P}$ is modeled as a Dirichlet Distribution while assuming a uniform Dirichlet prior. We sample 15 state-action transition samples from a single episode and use them for analytically computing the concentration parameters of the posterior Dirichlet distribution. We sample 100 transition models each from the posterior distribution for training and testing purposes.

~\cref{fig:riverswim} compares the performance of the dynamic soft-robust algorithm namely the SRVI algorithm with RSVF~\cite{russel2019BeyondCR} (percentile-criterion) and Bayesian Confidence Region~\cite{russel2019BeyondCR}.
Qualitatively, the results on this domain mirror what we have observed in the Inventory domain. Again, SRVI outperforms BCR and is comparable to RSVF in mean performance. However, SRVI achieves the best robust performance at $\lambda=0.75$.
\begin{figure}[h!]
    \centering
    \includegraphics[width=0.5\linewidth]{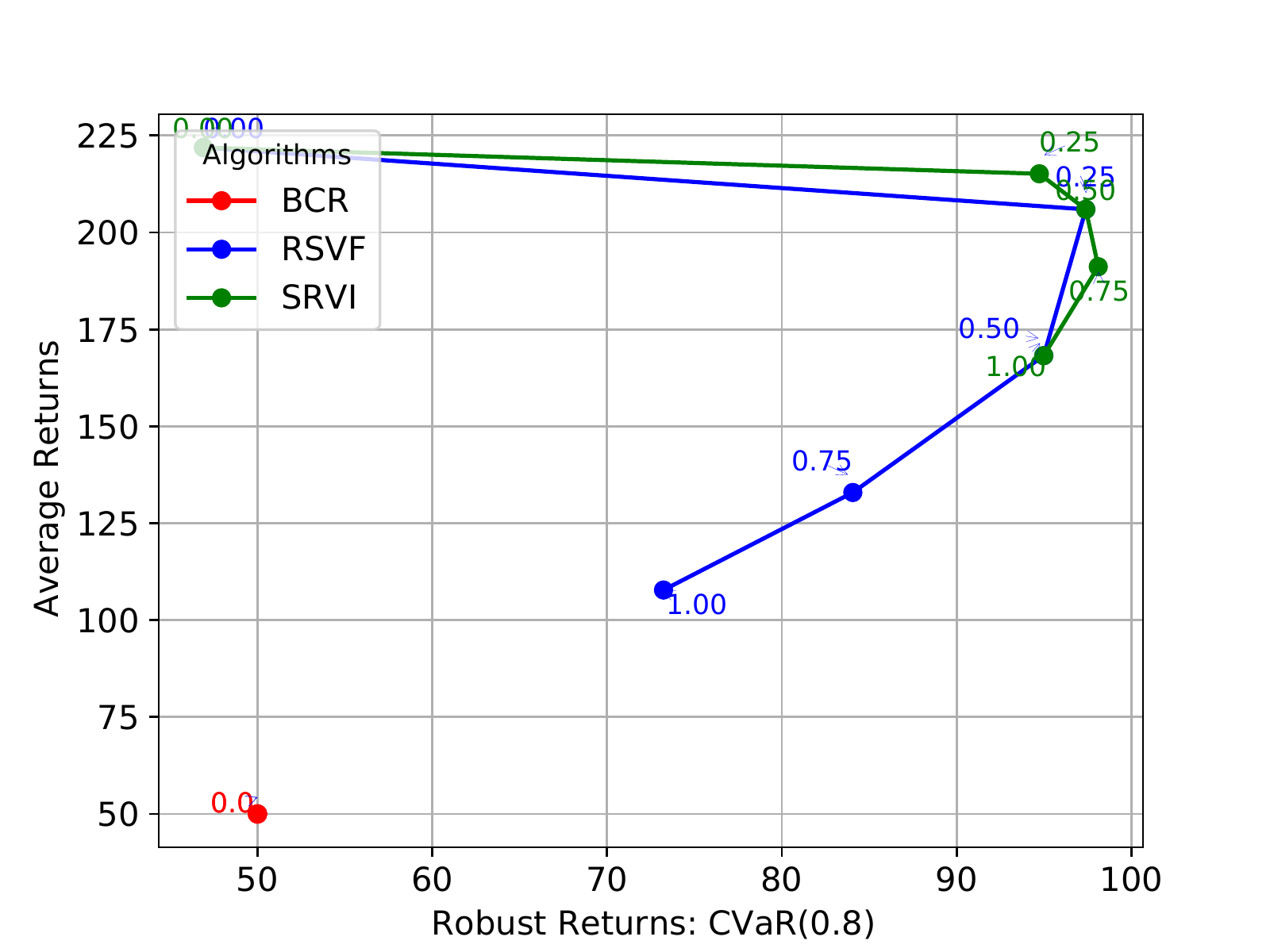}
    \caption{Comparison of Mean and Robust Performance of SRVI, RSVF and BCR in Riverswim domain for different values of $\lambda$ indicated by the overlay text. RSVF outperforms RSVF and BCR for appropriately chosen $\lambda$ values.} \label{fig:riverswim}
\end{figure}

\subsection{Code Details}
Since our code is heavily dependent on one of our in-house libraries which cannot be easily de-anonymized, we will release the code after the paper is published.

\section{Related Work}
\label{sec:Related work}

Numerous robust objectives for mitigating model uncertainty have been proposed in the literature. We discuss a number of them in more detail in this section.

\paragraph{Dynamic robust objectives.} A vast majority of work in Robust RL has studied objectives that assume a dynamic uncertainty model for achieving tractability. \citeasnoun{Mankowitz2020RobustRL} proposed robust algorithms that optimize entropy-regularized policies against the worst model in the uncertainty set. While these algorithms scale to continuous state and action spaces, they do not provide any kind of probabilistic guarantees on the expected returns like our framework and compute overly conservative policies. On the other hand,~\citep{Derman2018Soft-RobustAC} proposed soft-robust actor-critic that optimizes only the mean of the expected returns computed for a fixed distribution over models in the uncertainty set.

In contrast to the prior work, our dynamic soft-robust algorithm dynamically computes the distribution over uncertain models that provide guarantees on the user-specified quantile of the expected returns for the optimal policy. \citeasnoun{Derman2019BayesianRobust} introduced scalable algorithms that optimize an RMDP objective while accounting for changing dynamics. This framework also suffers from the shortcomings of the percentile criterion. Another related work~\cite{Xu2012Distributionally} constructs a plausible framework to incorporate any probabilistic information about the uncertain models in RMDPs and shows a connection between coherent risk measures and distributionally-robust MDPs. However, their main objective is different from ours as they do not aim to address the shortcomings of the percentile criterion. Finally, in the same vein as our work, policy gradient methods for optimizing $\cvar$ of expected returns have been studied by~\citeasnoun{Hiraoka2019LearningRO}. Nonetheless, these methods~\cite{Hiraoka2019LearningRO} do not exploit the coherent properties of this measure and only tend to find local optimal policies.

\paragraph{Static robust objectives.} Few works have focused on optimizing robust objectives while retaining the static uncertainty model assumption~\cite{bucholz2020CMFMP,Steimle2018MM,bucholz2019CMDP,merakli2019RiskA}. However, we note that the robust objectives used in these works are quite different than ours. \citeasnoun{Steimle2018MM} proposed a mixed-integer linear program and a fast heuristic algorithm to optimize the weighted expected returns across different models in a finite-horizon setting, whereas our objective optimizes the policy for the worst distribution over models in the ambiguity set.~\citeasnoun{bucholz2020CMFMP} uses the same objective as in~\citeasnoun{Steimle2018MM}, but considers both finite and infinite-horizon settings. The authors of~\citeasnoun{Steimle2018MM} proposed a MILP for calculating the exact deterministic policy in the finite-horizon setting, and other approximation algorithms that optimize a finite class of randomized Markovian policies for the infinite-horizon case. In another similar work, \citeasnoun{merakli2019RiskA} proposed an approximate MILP for optimizing the percentile-criterion. However, since the original objective is non-convex, the approximation may not generate optimal deterministic solutions.

\paragraph{Ambiguity set optimization.} Some related work has considered partial correlations between uncertain model parameters to mitigate the conservativeness of learned policies~\cite{Derman2019BayesianRobust,Mannor2016KRect,goyal2020BeyondR,Mannor2012LightningDN}. Examples of such works are k-rectangular~\cite{Mannor2016KRect,Mannor2012LightningDN} and r-rectangular~\cite{goyal2020BeyondR} ambiguity sets. These approaches mitigate the conservativeness of S- and SA-rectangular ambiguity sets by capturing correlations between the uncertainty and by limiting the number of times the uncertain parameters deviate from the mean parameters. Despite this progress, most of this works still relies on weak statistical concentration bounds for the construction of ambiguity sets, which can make the ambiguity sets unnecessarily large and result in conservative policies. In contrast, the soft-robust ambiguity sets are convex and can be precisely constructed without using concentration bounds. Therefore, the soft-robust ambiguity sets are relatively tighter and result in learning less-conservative solutions.
\end{document}